\newtheorem{theorem}{Proposition}
\newtheorem*{assumption*}{Assumptions}
\newcommand{\method}{SpLiCE~} 
\newcommand\blfootnote[1]{%
  \begingroup
  \renewcommand\thefootnote{}\footnotetext{#1}%
  \endgroup
}
\title{Interpreting CLIP with Sparse Linear \\Concept Embeddings (\texttt{SpLiCE} \texttwemoji{knot})}
\author{ 
 Usha Bhalla$^*$ \\
 Harvard University $^{a,b}$
   \And
 Alex Oesterling$^*$ \\
  Harvard University $^b$\\
  \And
 Suraj Srinivas \\
  Harvard University $^b$\\
  \AND 
 Flavio P. Calmon\footnotemark[2] \\
  Harvard University $^b$\\
    \And
 Himabindu Lakkaraju\footnotemark[2] \\
  Harvard University $^{b,c}$\\
}
\begin{document}

\maketitle

\blfootnote{$^*$ Equal contribution, order by coin flip. \qquad $^\dagger$ Equal contribution, alphabetic order.}
\blfootnote{$^a$ Kempner Institute for the Study of Natural \& Artificial Intelligence}
\blfootnote{$^b$ School of Engineering and Applied Sciences \qquad $^c$ Harvard Business School}

\begin{abstract}
CLIP embeddings have demonstrated remarkable performance across a wide range of multimodal applications. However, these high-dimensional, dense vector representations are not easily interpretable,  limiting our understanding of the rich structure of CLIP and its use in downstream applications that require transparency. 
In this work, we show that the semantic structure of CLIP's latent space can be leveraged to provide interpretability, allowing for the decomposition of representations into semantic concepts. 
We formulate this problem as one of sparse recovery and propose a novel method, Sparse Linear Concept Embeddings (\method \texttwemoji{knot}), for transforming CLIP representations into sparse linear combinations of human-interpretable concepts. Distinct from previous work, \method is task-agnostic and can be used, without training, 
to explain and even replace traditional dense CLIP representations, maintaining high downstream performance while significantly improving their interpretability. We also demonstrate significant use cases of \method representations including detecting spurious correlations and model editing. Code is provided at \url{https://github.com/AI4LIFE-GROUP/SpLiCE}.
\end{abstract}

\section{Introduction}

Natural images include complex semantic information, such as the objects they contain, the scenes they depict, the actions being performed, and the relationships between them. Machine learning models trained on visual data aim to encode this semantic information in their representations to perform a wide variety of downstream tasks, such as object classification, scene recognition, segmentation, or action prediction. However, it is often difficult to enforce explicit encoding of these semantics within model representations, and it is even harder to interpret these semantics post hoc to better understand what models may have learnt and how they leverage this information. Further, model representations can be brittle, encoding idiosyncratic patterns specific to individual datasets and modalities instead of general human-interpretable semantic information. Multimodal models have been proposed as a potential solution to this issue, and methods such as CLIP \citep{radford21alearning} have empirically been found to provide highly performant, semantically rich representations of image data. The richness of these representations is evident from their high performance on a variety of tasks, such as zero-shot classification and image retrieval \citep{radford21alearning}, image captioning \citep{mokady2021clipcap}, and image generation \citep{ramesh2022hierarchical}. However, despite their performance, it remains unclear how to quantify the semantic content contained in their dense representations. In this work, we answer the question: \textit{can we decompose CLIP embeddings into human-interpretable representations of the semantic concepts they encode?} This can provide insight into the types of tasks CLIP can solve, the biases it may contain, and the manner through which downstream predictions are made.

Existing literature in areas such as concept bottleneck models \citep{koh2017understanding}, disentangled representation learning \citep{bengio2013deep}, and mechanistic intepretability \citep{olah2017feature} have proposed various approaches to understanding the semantics encoded by representations. However, these methods generally require predefined sets of concepts \citep{kim2018interpretability}, data with concept labels \citep{hsu2023disentanglement}, or rely on qualitative visualizations, which can be unreliable \citep{geirhos2023don}. Similar to these lines of work, we aim to recover representations that reflect the underlying semantics of the inputs. However, distinct from these works, we propose to do this in a task-agnostic manner and without concept datasets, training, or qualitative analysis of visualizations.

Our method, \method\!, leverages the highly structured and multimodal nature of CLIP embeddings for interpretability, and decomposes CLIP representations via a semantic basis to yield a sparse, human-interpretable representation. Remarkably, these interpretable \method embeddings have favorable accuracy-interpretability tradeoffs when compared to black-box CLIP representations on metrics such as zero-shot accuracy. 
Our overall contributions are:

\begin{figure}[t]
    \centering
    \includegraphics[width=\linewidth]{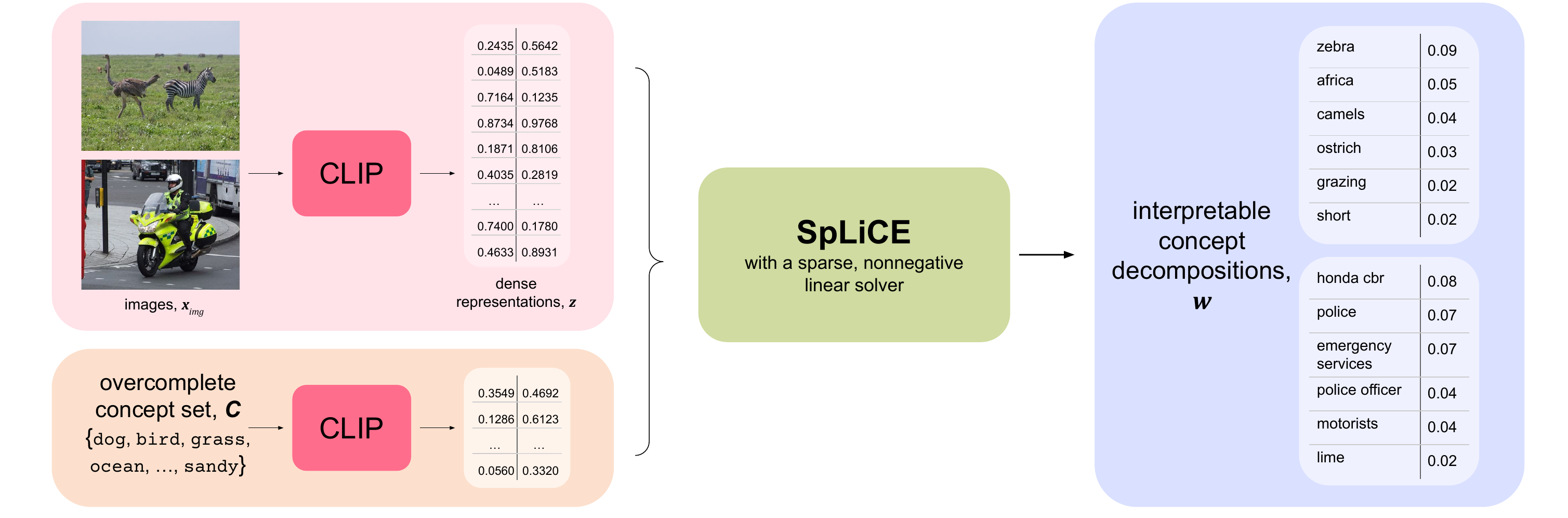}
    \caption{Visualization of \method\!, which converts dense, uninterpretable CLIP representations (\textbf{z}) into sparse semantic decompositions (\textbf{w}) by solving for a sparse nonnegative linear combination over an overcomplete concept set (\textbf{C}).}
    \label{fig:teaser}
\end{figure}

\begin{itemize}
    \item In Sections \ref{sec:assumptions} and \ref{sec:method}, we formalize the sufficient conditions under sparse decomposition of CLIP is feasible,
    and introduce \method\!, a novel method that decomposes dense CLIP embeddings into sparse, human-interpretable concept embeddings. 
    \item Our extensive experiments in Section \ref{sec:experiments} reveal that \method recovers highly sparse\footnote{we recommend and use sparsity levels of $\sim$ 10-30 in practice}, interpretable representations with high performance on downstream tasks, while accurately capturing the semantics of the underlying inputs. 
    \item In Section \ref{sec:case_studies}, we present two case studies for applying \method\!: spurious correlation detection, and model editing. Using \method\!, we uncover a spurious correlation in the CIFAR100 dataset, where we find the "woman" concept and the "swimwear" concept to be correlated owing to the prevalence of women in swimwear in CIFAR100.
\end{itemize}
\section{Related Work}

\textbf{Linear Representation Hypothesis.} In language modeling, the \textit{linear representation hypothesis} suggests that many semantic concepts are approximately linear functions of model representations \citep{mikolov2013linguistic, park2023linear, arora2018linear, arora2016latent, faruqui2015sparse}. Recent work has also shown that multimodal models encode concepts additively, behaving like bags-of-words representations \citep{yuksekgonul2022and}. Relatedly, \citep{merullo2022linearly, seth2023dear} show that there exists a linear mapping between image and text embeddings in arbitrary models. Our work makes use of these distinct but related observations to convert dense CLIP representations to sparse semantic ones. 

\begin{figure*}[!ht]
    \centering
        \includegraphics[width=\linewidth]{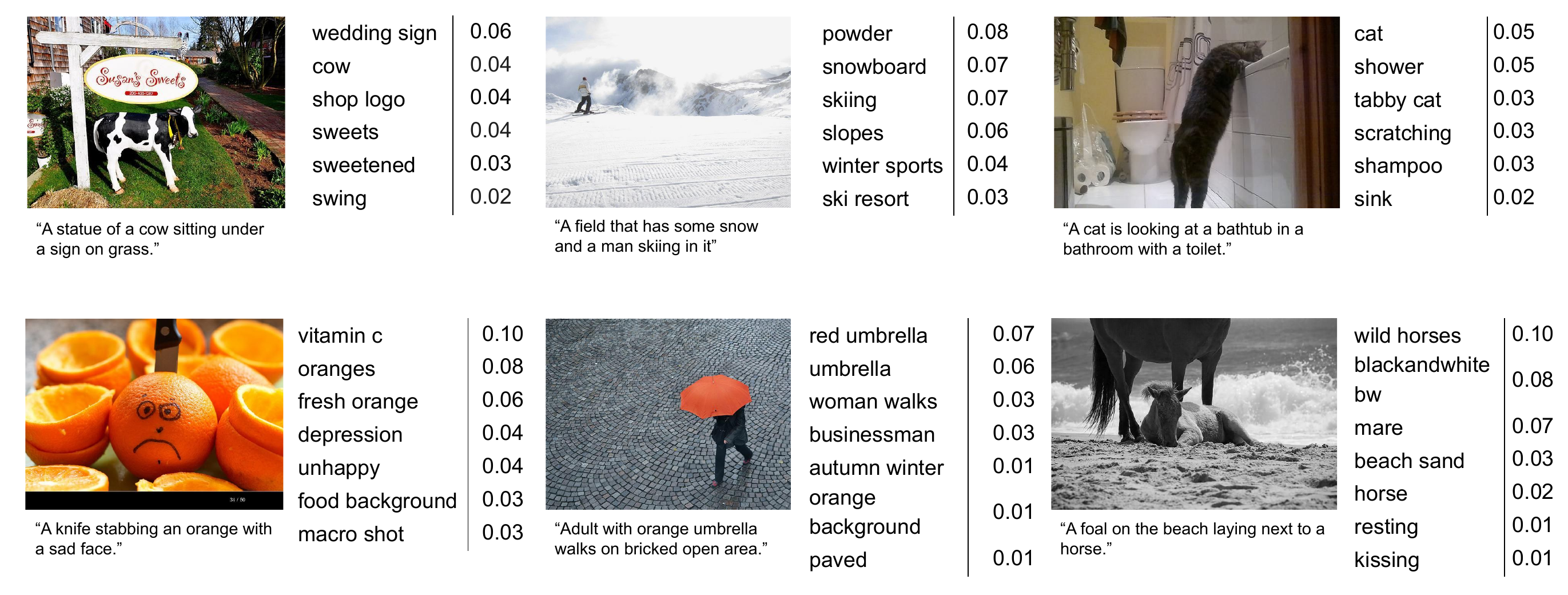}
        \caption{Example images from MSCOCO shown with their captions below and their concept decompositions on the right. We display the top seven concepts for visualization purposes, but images in the figure had decompositions with 7-20 concepts.}
        \label{fig:example_decomps}
\end{figure*}

\textbf{Concept Bottlenecks and Attribute Learning.} Concept Bottleneck Models (CBMs) \citep{koh2020concept}, and attribute-based models \citep{lampert2009learning, torresani2010efficient, kumar2009attribute} learn intermediate representations of scores over concepts or image attributes for use with a final linear classification head, creating interpretable concept representations. However, these require expert-labeled concept or attribute datasets to train, which is expensive. Recent work on concept-bottlenecks for multimodal models avoids needing such labeled datasets, but still requires concept labels for specific tasks, which is obtained by querying large language models (LLMs) \citep{chattopadhyay2023information, oikarinen2023label, panousis2023sparse}, making these methods task-specific and heavily reliant on the domain knowledge and subject to the biases of LLMs. On the other hand, \method uses a large-scale and overcomplete concept dictionary, avoiding dependence on training, specific domain knowledge, or a downstream task. Consequently, it can even be applied to understand unstructured, unsupervised image datasets in a label-free manner.

\textbf{Mechanistic Interpretability and Disentanglement.} 
Mechanistic interpretability explains representations through model activations, by labeling circuits and neurons in networks with feature visualization \citep{olah2017feature, olah2020zoom} or by measuring concept activations and directions in latent space \citep{bau2017network, fong2018net2vec, kim2018interpretability, mcgrath2022acquisition, lucieri2020interpretability, zhou2018interpretable}. Recent work \citep{fel2023holistic} combines these methods, using dictionary learning to extract visual concept activations, whose semantics can be identified via feature visualization.  Work in disentangled representation learning has developed architectures that capture independent factors of variation in data \citep{hsu2023disentanglement,bricken2023towards, chen2018isolating, bengio2013deep, comon1994independent, hyvarinen2000independent}, allowing for manual probing of disentangled representations for human-interpretable concepts. 
In both mechanistic interpretability and disentangled representation learning, methods typically rely on labeled concept sets, manual labeling of visualizations, or computationally intensive searches over data and latent representations or neurons to identify concepts. However, associating human-interpretable semantics with arbitrary neurons or latent directions is challenging, leading to the unreliability \citep{geirhos2023don, makelov2023subspace} exhibited by such methods. Our approach side-steps this issue by decomposing CLIP representations into a predetermined set of concepts.

\textbf{CLIP Interpretability.} Many recent works leverage the semantic structure of CLIP and its text encoder to interpret representations. For example, \citep{moayeri2023text}, \citep{yuksekgonul2022post}, and \citep{yun2022vision} construct concept similarity scores of image embeddings for use by downstream CBMs or probes, but these representations are not interpretable due to their lack of sparsity and the presence of negative concepts. \citet{chen2023stair} create a custom vision-language architecture with a sparse latent dictionary, but it requires training from scratch and cannot be used post-hoc to explain existing models. \citet{gandelsman2023interpreting} also leverage the text encoder of CLIP to explain components of the image embedding, but are limited to ViT architectures and take a mechanistic interpretability-style approach requiring a labeled text dataset. \citet{chattopadhyay2023information} build concept bottlenecks for specific classification tasks by expressing CLIP image representations as a sparse linear combination of task-specific concept vectors. However, their decomposition includes negative concepts, reducing interpretability, and uses task-specific concept dictionaries. \citet{grootendorst2022bertopic} generate textual topics of datasets through multimodal topic modeling, which cannot provide explanations of individual representations. Distinct from these works, \method is more interpretable due to its sparsity, overcompleteness, and non-negativity, and is task-agnostic, aiming to serve as a drop-in replacement for black-box CLIP representations without requiring training. 
\section{When do Sparse Decompositions Exist?}
\label{sec:assumptions}
In this section, we aim to answer the question: \textit{under what conditions can CLIP representations be decomposed into sparse semantic representations}? To do so, we must reason about both the properties of CLIP as well as the properties of the underlying data. 

\newcommand{\Z}{\mathbf{z}}
\newcommand{\E}{\mathbf{e}}
\newcommand{\X}{\mathbf{x}}
\newcommand{\C}{\mathbf{C}}
\newcommand{\R}{\mathbb{R}}
\newcommand{\W}{\mathbf{w}}

\textbf{Notation.} Let $\X^{\mathrm{img}} \in \R^{d_i}$, $\X^{\mathrm{txt}} \in \R^{d_t}$ be image and text data, respectively. Given the CLIP image encoder $f: \R^{d_i} \rightarrow \R^d$ and text encoder $g: \R^{d_t} \rightarrow \R^d$, we define CLIP representations in $\mathbb{R}^d$ as $\Z^{\mathrm{img}} = f(\X^{\mathrm{img}})$ and $\Z^{\mathrm{txt}} = g(\X^{\mathrm{txt}})$. Our method uses dictionary learning to approximate $\Z^{\mathrm{img}}$ with a concept decomposition $\W^* \in \mathbb{R}_+^c$ over a fixed concept vocabulary $\C \in \mathbb{R}^{d \times c}$.  We define the resulting reconstruction of $\Z^{\mathrm{img}}$ from $\C$ and $\W^*$ as $\hat{\Z}^{\mathrm{img}}$. 

The goal of our method is to approximate $f(\X^\text{img}) \approx \mathbf{C} \W^*$, such that $\W^*$ is non-negative and sparse, and in this section we formalize when this is possible. We begin by considering a data-generating process for coupled image and text samples. Specifically, we model the generative process parameterized by a $k$-dimensional latent concept vector ${\omega} \in \mathbb{R}_+^{k}$ and a random noise vector $\epsilon \in \R^{l}$ as
\begin{align*}
    \X^\text{img} = {h}^\text{img}(\omega, \epsilon), ~~~~ \X^\text{txt} = h^\text{txt}(\omega, \epsilon),
    ~~~\omega \sim \rho, 
    ~~~\epsilon \sim \phi,
\end{align*}
where $\rho$ is a prior distribution over semantic concepts, $\phi$ is a prior distribution over nonsemantic concepts (such as camera orientation and lighting for images or arbitrary choices between synonyms for text), and $h^\text{img}: \R^{k + l} \rightarrow \R^{d_i}$, and $h^\text{txt}: \R^{k + l} \rightarrow \R^{d_t}$ represent the real-world data-generating process from latent variables $(\omega, \epsilon)$ to images and text respectively. Here, each coordinate $\omega_i \in \R_+$ encodes the degree of prevalence of the $i^\text{th}$ concept in the underlying data. We now list a set of sufficient conditions for our data-generating process and CLIP that admit a sparse decomposition of images into concepts.

\textbf{Sufficient Conditions for Sparse Decomposition.}
\begin{enumerate} 
    \item Images and text are sparse in concept space: for some $\alpha \ll k$, we have $\| \omega \|_0 \leq \alpha, \forall~ \omega \sim \rho$.
    \item  CLIP captures semantic concepts $\omega$ and not $\epsilon$: $\forall \epsilon, \epsilon', f \circ h^\text{img}(\omega, \epsilon) = f \circ h^\text{img}(\omega, \epsilon')$ and similarly for $h^\text{txt}.$
    \item CLIP is linear in concept space: $g \circ h^\text{txt}$ and $f \circ h^\text{img}$ are linear in $\omega$.
    \item CLIP image and text encoders are aligned: for a given $\omega$, $f \circ h^\text{img} (\omega, \epsilon) = g \circ h^\text{txt} (\omega, \epsilon)$.
\end{enumerate}

 We emphasize that the goal of enumerating a set of sufficient conditions for sparse decomposition is not to claim that these exactly hold in practice, but rather to reason about when sparse decompositions--as done in this work--are appropriate. In the Appendix (Section \ref{sec:proof}, Prop.~\ref{thm:prop1}) we formalize and prove this claim, but in the interest of simplicity we keep the discussion here informal. 
We note that many of these are natural; Assumption 1 reflects how real-world images and text are simple and rarely contain complex semantic content, and the CLIP training process optimizes for Assumption 2 and 4\footnote{In practice we find that CLIP's image and text encoders are not fully aligned, so we apply a preprocessing step (Sec \ref{sec:modality_alignment}).}.
Of these, the most critical one is Assumption 3, which closely relates to the linear representation hypothesis \citep{park2023linear}, which we investigate below. 

\paragraph{Sanity Checking CLIP's Linearity.}
\label{sec:clip_lin}

\begin{table}[t]
\caption{Sanity checking the linearity of CLIP Embeddings.}
\label{tab:clip_lin}
\begin{center}
\begin{small}
\begin{sc}
\begin{tabular}{lcccr}
\toprule
           & $w_a$     & $w_b$     & cosine($\hat{z}, z$) \\ 
\midrule
ImageNet   & 0.48 $\pm$ 0.09 & 0.45 $\pm$ 0.09 & 0.76 $\pm$ 0.05       \\
CIFAR100   & 0.45 $\pm$ 0.08 & 0.42 $\pm$ 0.08 & 0.75 $\pm$ 0.03       \\
MIT States & 0.48 $\pm$ 0.09 & 0.45 $\pm$ 0.09 & 0.76 $\pm$ 0.05       \\
\midrule
COCO Text  & 0.59 $\pm$ 0.12 & 0.47 $\pm$ 0.12 & 0.88 $\pm$ 0.04       \\
\bottomrule
\end{tabular}
\end{sc}
\end{small}
\end{center}
\end{table}

We provide evidence for the third assumption, the linearity of CLIP, in a toy setting. We begin by asking the following question to confirm the general linearity of CLIP embeddings: ``if two inputs are concatenated, does their joint embedding equal the average of their two individual embeddings?". For the image domain, we combine two images, $x_a, x_b$, to form their composition $x_{ab}$ by placing $x_a$ in the top left quarter and $x_b$ in the bottom right quarter of a blank image. For the text domain, we simply append text $x_b$ to text $x_a$ to form $x_{ab}$. We then embed $x_a, x_b, x_{ab}$ with CLIP to get $z_a, z_b, z_{ab}$. Solving the equation $w_a*z_a + w_b*z_b = z_{ab}$ for scalar weights $w_a, w_b$ then allows us to assess the linearity of $z_a, z_b, z_{ab}$. We report $w_a, w_b$ and the cosine similarity between $\hat{z}_{ab} = [z_a, z_b] \cdot [w_a, w_b]$ and $z_{ab}$ in Table \ref{tab:clip_lin}. 

In general, we find that the composition of two inputs results in an embedding that is approximately equal to the average of the two input components, with $w_a, w_b$ being very close to 0.5 across all datasets and for both modalities, providing preliminary evidence for the linearity of CLIP embeddings for both image and language. 
\color{black}
\section{Method}
\label{sec:method}

In this section, we introduce \method \!, a method for expressing CLIP's image representations as sparse, nonnegative, linear combinations of concept dictionary elements. We begin by framing this problem as one of sparse recovery. We then discuss our design choices, including how we choose the concept dictionary and how to address the modality gap between CLIP's images and text representations. Finally, we formalize the optimization problem used in this work.

\subsection{Sparse Nonnegative Concept Decomposition} \label{sec:splice}

Our goal is to construct decompositions of dense CLIP representations that are human-interpretable, useful, and faithful. To do so, we formulate decomposition as a sparse recovery problem with three main desiderata. First, for the decompositions to be interpretable to humans they must be comprised of human interpretable atoms. We argue that language is a naturally interpretable interface for humans, and construct our concept vocabulary $\mathbf{C}$ out of 1- and 2-word atoms, such as ``coffee'', ``silver'', and ``birthday party''. 
Second, our decompositions must be simple and concise, which can be formulated as a sparsity constraint on the recovery. 
A large body of work in computational linguistics \citep{murphy2012learning, fyshe2014interpretable, fyshe2015compositional, faruqui2015sparse}, neuroscience \citep{olshausen1997sparse, olshausen1996emergence}, and interpretability \citep{ramaswamy2022overlooked, ribeiro2016should, zhou2018interpretable} have demonstrated that a human-aligned semantic model should be sparse in representation. 
Furthermore, \citep{ramaswamy2022overlooked} found that users can best understand explanations with fewer than 32 concepts while in linguistics, \citep{vinson2008semantic, mcrae2005semantic, garrard2001prototypicality} find participants describe concepts and objects with up to 20 semantic properties, motivating our desiderata of sparsity. 
Third, our decompositions must be constructive, i.e., we must decompose representations in terms of their constituent concepts. For this reason, we require the weights of decompositions to be strictly nonnegative, to avoid having ``negative'' concept weights which do not always carry semantic meaning. Furthermore, prior work by \citet{zhou2018interpretable} has argued that \textit{``negations of concepts are not as interpretable as positive concepts.''}  More specifically, while a small set of concepts have well-defined antonyms which may be viewed as their negative counterparts (\texttt{``day''} $\leftrightarrow$ \texttt{``night''}), negative concepts do not carry semantic meaning in general (\texttt{``tiger''} $\leftrightarrow$ \texttt{??}). Furthermore, we find that even when antonyms exist, they are not negatives of each other in CLIP latent space (see Appendix \ref{sec:app_negweights}).
To avoid dependence on negative weights and ensure that all concepts are captured, we construct an overcomplete dictionary containing a wide range of concepts, including antonyms. We build on top of this literature and provide a semantic decomposition satisfying these properties suitable for multimodal models like CLIP.

\begin{figure*}
\centering
\begin{subfigure}{0.32\linewidth}
   \includegraphics[width=\linewidth, trim={1ex 0ex 10ex 1ex}, clip]{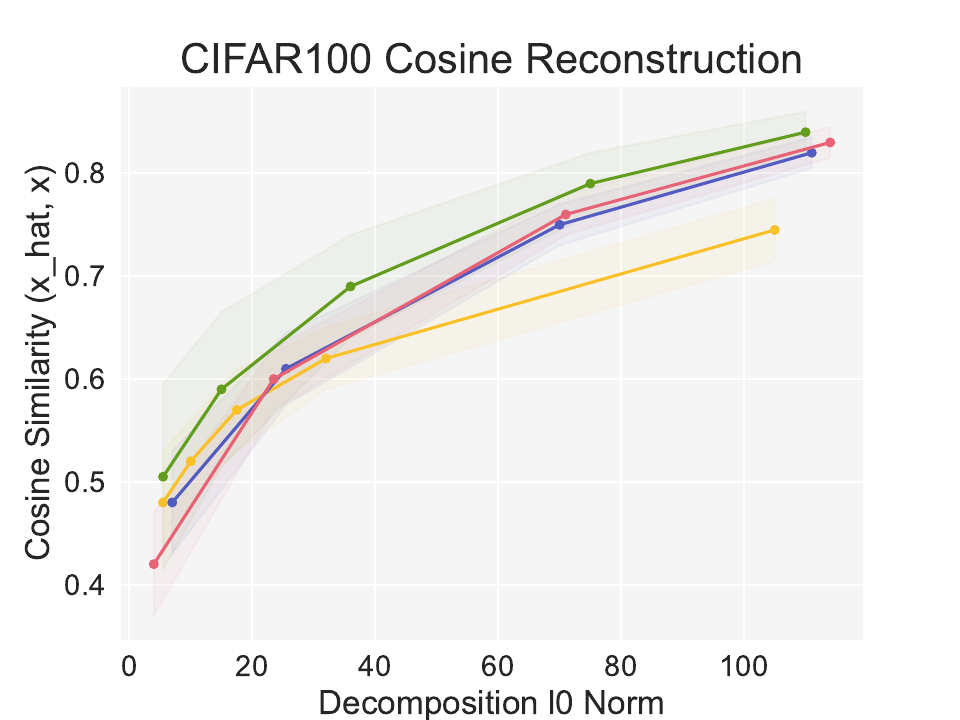}
\end{subfigure}
\hfill
\begin{subfigure}{0.32\linewidth}
   \includegraphics[width=\linewidth, trim={1ex 0ex 10ex 1ex}, clip]{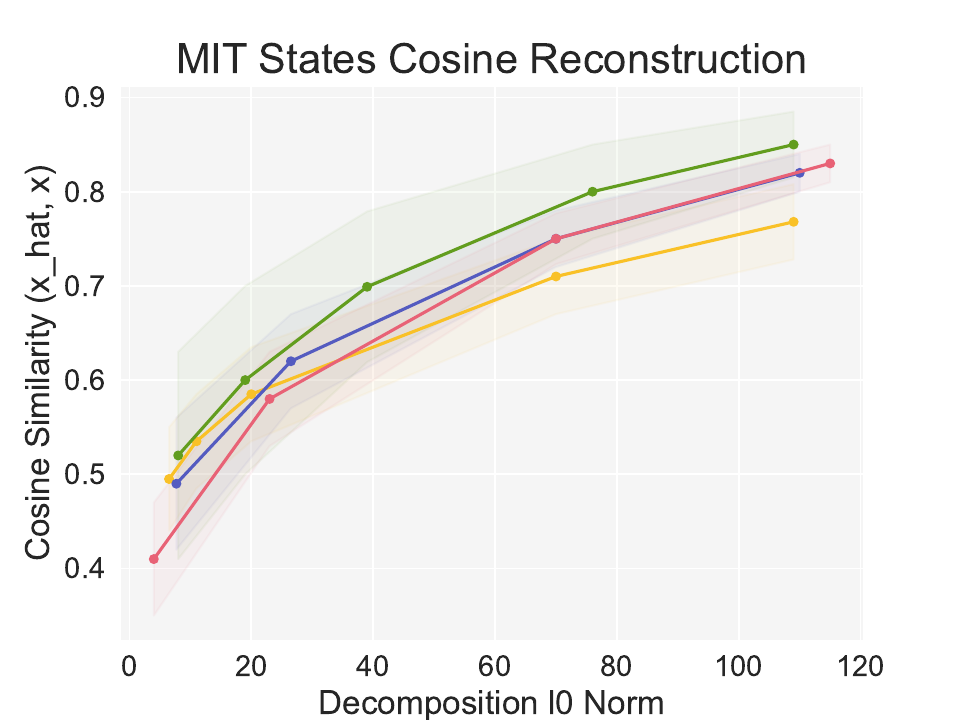}
\end{subfigure}
\hfill
\begin{subfigure}{0.32\linewidth}
   \includegraphics[width=\linewidth, trim={1ex 0ex 10ex 1ex}, clip]{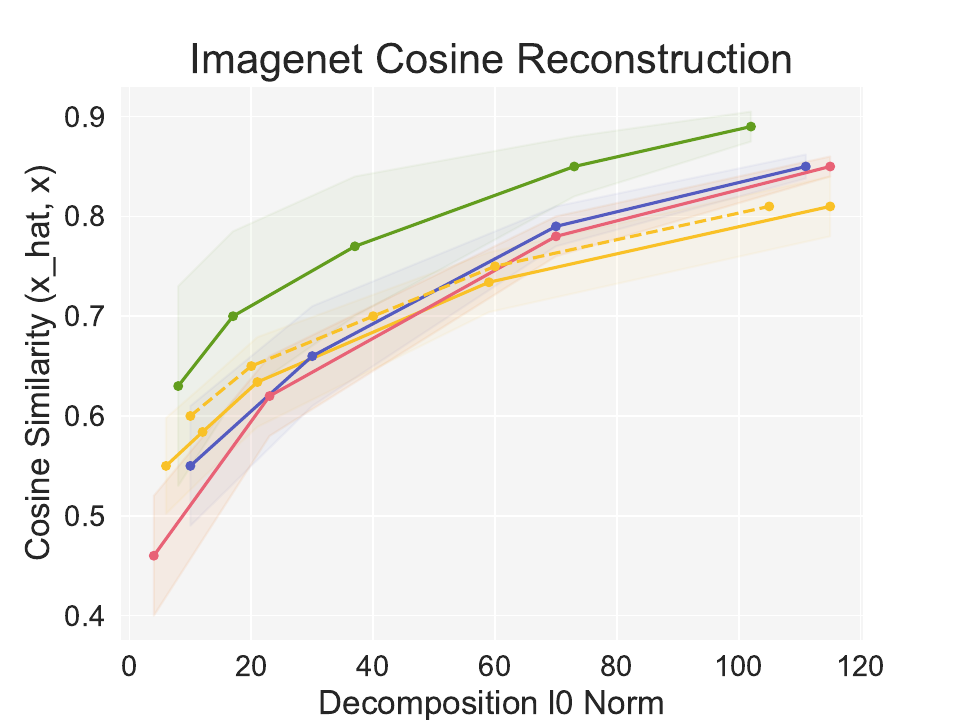}
\end{subfigure}
\begin{subfigure}{0.32\linewidth}
   \includegraphics[width=\linewidth, trim={1ex 0ex 10ex 1ex}, clip]{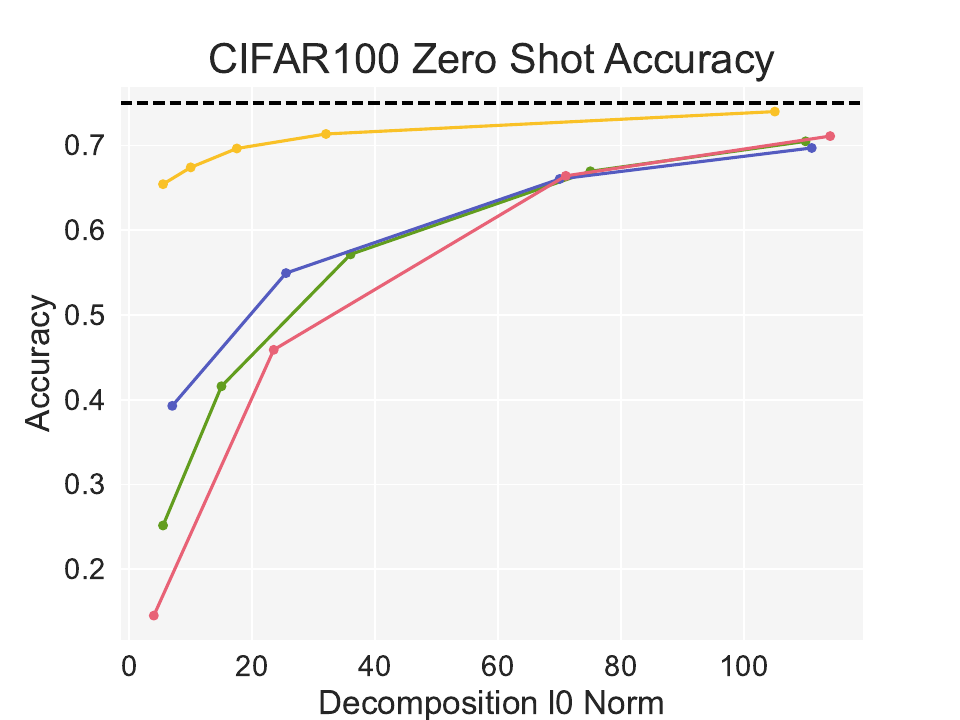}
\end{subfigure}
\hfill
\begin{subfigure}{0.32\linewidth}
   \includegraphics[width=\linewidth, trim={1ex 0ex 10ex 1ex}, clip]{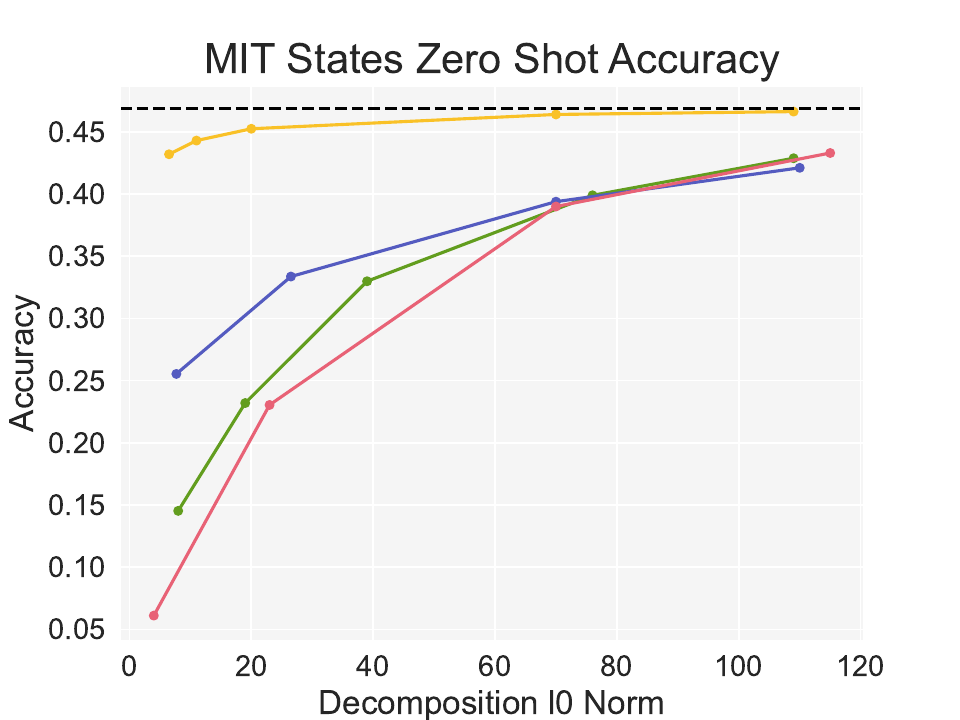}
\end{subfigure}
\hfill
\begin{subfigure}{0.32\linewidth}
   \includegraphics[width=\linewidth, trim={1ex 0ex 10ex 1ex}, clip]{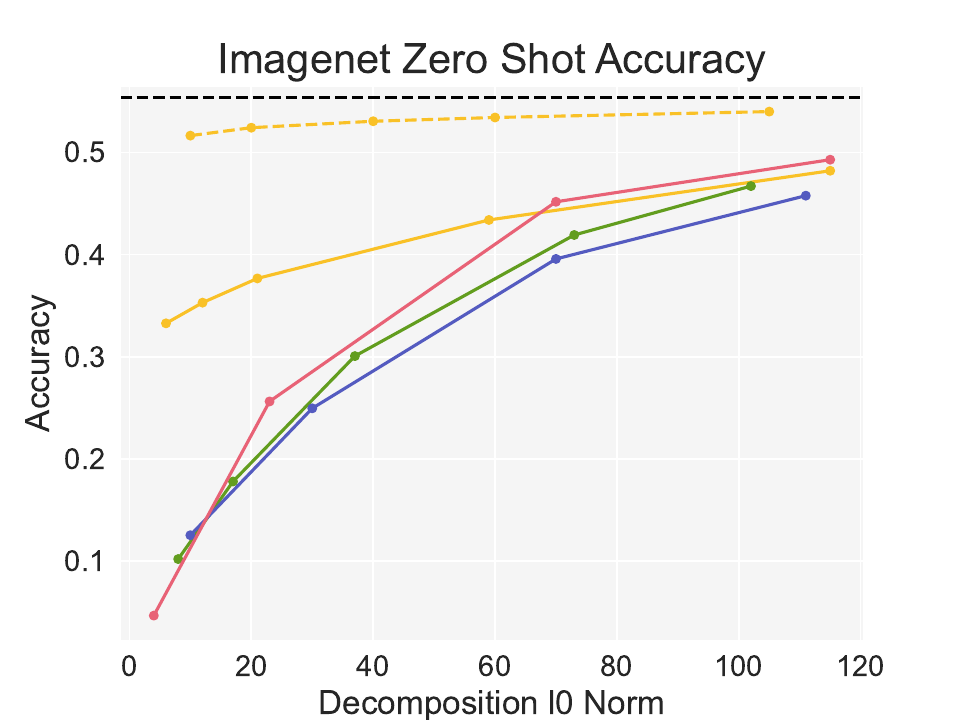}
\end{subfigure}
\begin{subfigure}{0.9\linewidth}
   \includegraphics[width=\linewidth]{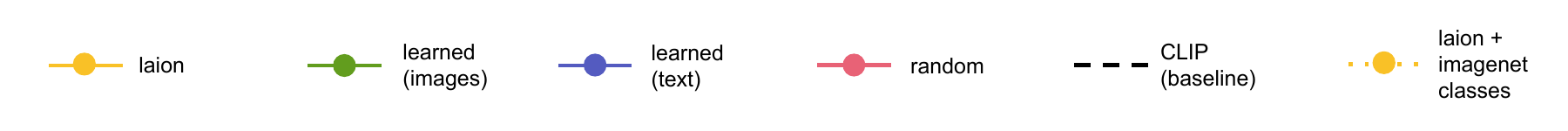}
\end{subfigure}
\caption{Performance of \method decomposition representations on zero-shot classification tasks (bottom row) and cosine similarity between CLIP embeddings and \method embeddings (top row).
Our proposed semantic dictionary (yellow) closely approximates CLIP on zero-shot classification accuracy, but not on the cosine similarity. This indicates that \method captures the semantic information in CLIP, but not its non-semantic components, explaining both the high zero-shot accuracy and low cosine similarity. See \S \ref{sec:perf} for discussion.
}
\label{fig:cosine}
\end{figure*}

\textbf{Concept Vocabulary.}
\label{sec:concept_vocab}
Natural language is an intuitive, interpretable, and compact medium for communicating semantic information. Thus, we choose to represent the semantic content contained in CLIP embeddings as combinations of natural language semantic concepts, where we define concepts as semantic units that can be expressed concisely, by one- or two-word phrases. Given that CLIP is used in a wide variety of downstream applications and is trained without a specific task in mind, we want our concept dictionary to be task-agnostic and to span \textit{all possible concepts CLIP could have learnt}.
To construct this vocabulary, we consider the most frequent one- and two-word bigrams in the text captions of the LAION-400m dataset \citep{schuhmann2021laion}, the dataset that most CLIP variants are trained on. We filter the captions to remove any NSFW samples and prune our concept set such that no two concept embeddings have a cosine similarity greater than 0.9. We also remove bigrams highly similar ($>0.9$ cosine similarity) to the average of their individual words. We finally choose the top $10,000$ most common single-word concepts and the top $5000$ most common two-word concepts as our concept vocabulary. We note that this vocabulary offers distinct advantages over those used in prior works. In particular, it is \textit{task-agnostic}, meaning that the efficacy of the decomposition is (in principle) independent of individual datasets. Furthermore, this dataset imposes minimal priors from outside curators, such as human experts or LLMs \citep{chattopadhyay2023information, oikarinen2023label, panousis2023sparse}. This allows us to interpret data through the lens of CLIP, to understand the information encoded, including potential biases and mistakes. 

\textbf{Modality Alignment.}\label{sec:modality_alignment} In order to decompose images into nonnegative combinations of text concepts, we must ensure that our concept set spans the space of possible image embeddings. 
However, \citep{liang2022mind} show the existence of a modality gap in CLIP, where image and text embeddings can lie in non-identical spaces on the unit sphere. 
We empirically find that CLIP image and text embeddings exist on two cones, as the distribution of pairwise cosine similarities between pairs of MSCOCO images and pairs of MSCOCO text captions concentrate at positive values, whereas the distribution of pairwise cosine similarities across modalities concentrates closer to zero. (See Appendix Fig. \ref{fig:modality_alignment}).
Not only does this prevent nonnegative decomposition, it also violates Assumption 4 from Section \ref{sec:assumptions}. 
To rectify this, we mean-center CLIP images with the image cone mean, estimated over MSCOCO ($\mathbf{\mu_{img}}$), and compute decompositions over the mean-centered concept vocabulary ($\mathbf{\mu_{con}}$). Note that the embeddings need to be re-normalized after centering to ensure they lie on the unit sphere. To convert our decompositions back into dense representations ($\hat{\Z}^\text{img}$), we uncenter the normalized dense embeddings $\hat{\Z}^\text{img}$ by adding the image mean back in and normalizing once again, to ensure they lie on the same cone as the original CLIP embeddings ($\Z^\text{img}$). 

\textbf{Optimization Problem.} Our optimization problem is formulated as follows. Let $\sigma(\X) = \X / \| \X \|_2$ be the normalization operation. Given a set of semantic concepts $\X^{\textrm{con}}=~$[``\texttt{dog}'', ``\texttt{tabby cat}'', ``\texttt{cloudy}'', $\cdots$ ], we construct a centered vocabulary $\C = \left[\sigma(g(\X^{\textrm{con}}_1) - \mathbf{\mu_{\textrm{con}}}), \cdots, \sigma(g(\X^{\textrm{con}}_c) - \mathbf{\mu_{\textrm{con}}})\right]$, where we recall that $g(\cdot)$ is the CLIP text encoder. Now, given the dictionary $\mathbf{C}$ and a centered  CLIP embedding $\Z = \sigma(\Z^{\textrm{img}}-\mathbf{\mu_{\textrm{img}}})$,  we seek to find the sparsest solution that gives us a cosine similarity score of at least $1-\epsilon$ for some small $\epsilon$:
\begin{align}
    \min_{\W \in \mathbb{R}_+^c} \|\W\|_0 
    ~~\text{s.t.}~~ \langle \Z, \sigma(\C\W) \rangle \geq 1-\epsilon. \label{eq:full_optimization}
\end{align}
As is standard practice, we relax the $\ell_0$ constraint and reformulate this as a minimization of MSE with an $\ell_1$ penalty, to construct the following convex relaxation\footnote{For more discussion on the relationship between Eq.~\eqref{eq:full_optimization} and Eq.~\eqref{eq:optimization}, see Appendix, Sec. \ref{sec:optimization_discussion}} of Eq.~\eqref{eq:full_optimization}:
\begin{align}
    \min_{\W \in \mathbb{R}_+^c} \|\C\W - \Z\|_2^2 + 2 \lambda \|\W\|_1. \label{eq:optimization}
\end{align}
Given the solution to the above problem $\W^*$, our reconstructed embedding is:
$    \hat{\Z}^{\textrm{img}} = \sigma(\C \W^* + \mathbf{\mu_{\textrm{img}}}).$ 
\section{Experiments}
\label{sec:experiments}

In this section, we evaluate our method to ensure that \method decompositions are interpretable, performant, and accurately reflect the semantic content of representations.

\subsection{Setup}
\label{sec:setup}
\textbf{Models. } 
All experiments shown in the main paper are done with the OpenCLIP ViT-B/32 model \cite{ilharco_gabriel_2021_5143773} with results for an additional model in Appendix \ref{sec:app_clip_rn50}. For all zero-shot classification tasks, we use the prompt template ``A photo of a \{\}''. \textbf{Datasets.}
We use CIFAR100 \cite{krizhevsky2009learning}, MIT States \cite{StatesAndTransformations}, CelebA \cite{liu2015faceattributes}, MSCOCO \cite{lin2014microsoft}, and ImageNetVal \cite{imagenet_cvpr09}
for our experiments with results for additional datasets in the Appendix (Section \ref{sec:add_zs})

\textbf{Decomposition. }
For all experiments involving concept decomposition, we use sklearn's \cite{scikit-learn} Lasso solver with a non-negativity flag and an $l_1$ penalty that results in solutions with $l_0$ norms of 5-20 (around 0.2-0.3 for most datasets).
We use a concept vocabulary chosen from a subset of LAION tokens as described in Section \ref{sec:concept_vocab}. Both image embeddings and dictionary concepts are centered and normalized as mentioned in Section \ref{sec:modality_alignment}, with the image mean used for centering computed over the MSCOCO train set and the concept mean computed over our chosen vocabulary. 

\subsection{Sparsity-Performance Tradeoffs}\label{sec:perf}
We assess the performance of \method decompositions by evaluating the reconstruction error in terms of cosine similarity between \method representations and CLIP embeddings, the zero-shot performance of \method \! decompositions, and the retrieval performance of \method embeddings. We compare the performance of decompositions generated from our semantic concept vocabulary to decompositions over random vocabulary and learned dictionary vocabulary baselines. All vocabularies are of size 15,000 concepts. The random vocabulary is sampled from a 512-dimensional normalized Gaussian distribution. The learned vocabularies are generated by using the Fast Iterative Shrinkage-Thresholding Algorithm (FISTA) \cite{beck2009fast} to learn optimal dictionaries given our sparse recovery problem (optimizing Equation \eqref{eq:full_optimization} for both $\mathbf{C}$ and $w$). Note that we learn separate dictionaries $\mathbf{C}_\text{img}$ and $\mathbf{C}_\text{text}$ to reconstruct MSCOCO image and text embeddings respectively. 
In Figure \ref{fig:cosine}, we plot the cosine reconstruction and zero-shot accuracy of image decompositions with the various dictionaries. We evaluate probing performance (Tables \ref{tab:decomp_probe_cifar}, \ref{tab:decomp_probe_mit}) and text-to-image and image-to-text retrieval in the Appendix (Figure \ref{sec:app_retrieval}). 

These results overall show \method efficiently navigates the interpretability-accuracy Pareto frontier and retains much of the performance of black-box CLIP representations with the semantic, human-interpretable LAION dictionary, significantly outperforming other dictionaries on semantic tasks such as zero-shot classification, probing, and retrieval. At the same time, we find that our semantic LAION dictionary does not result in accurate cosine similarity reconstructions of the original CLIP, often being on par with using random dictionaries. We believe this is because CLIP encodes both semantics of the underlying image and non-semantic "noise", which violates Assumption \#2 in Section  \ref{sec:assumptions}. Given that our \method decompositions only aim to encode semantics, they are unable to encode non-semantic aspects in the underlying representation, thus causing poor alignment in the cosine similarity sense, while simultaneously exhibiting excellent alignment on semantic tasks such as zero-shot accuracy.  For ImageNet, we find that many classes are animal species that cannot easily be described by 1-2 words (e.g. `\texttt{red-breasted merganser}', `\texttt{American Staffordshire terrier}'). Adding these class labels to our concept dictionary increases performance significantly, as shown by the dotted yellow line in Figure \ref{fig:cosine}.

\begin{figure}[h!]
    \centering
    \begin{minipage}{.5\columnwidth}
    \begin{subfigure}{0.49\columnwidth}
        \includegraphics[width=\columnwidth]{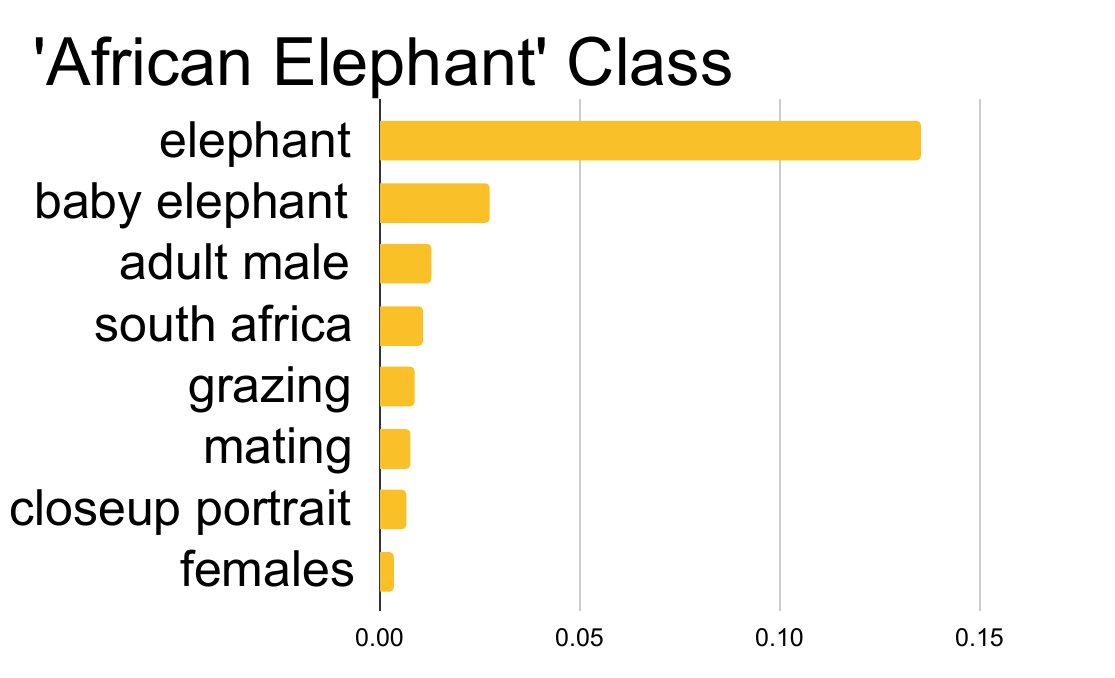}
    \end{subfigure}
    \begin{subfigure}{0.49\columnwidth}
        \includegraphics[width=\columnwidth]{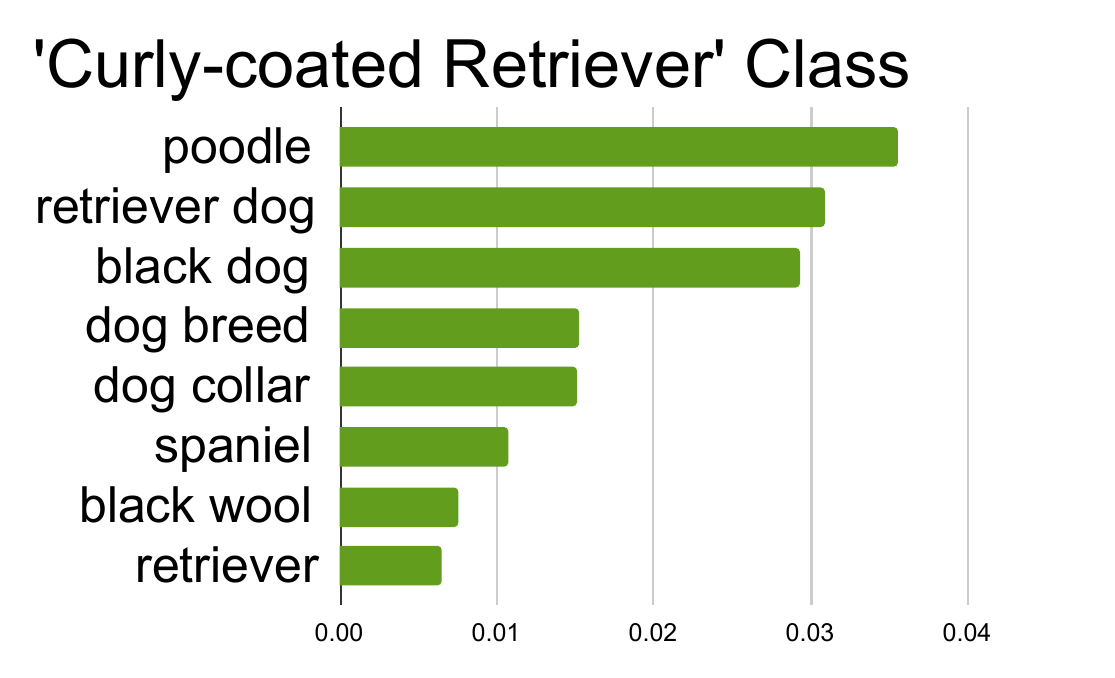}
    \end{subfigure}
    \\
    \begin{subfigure}{0.49\columnwidth}
        \includegraphics[width=\columnwidth]{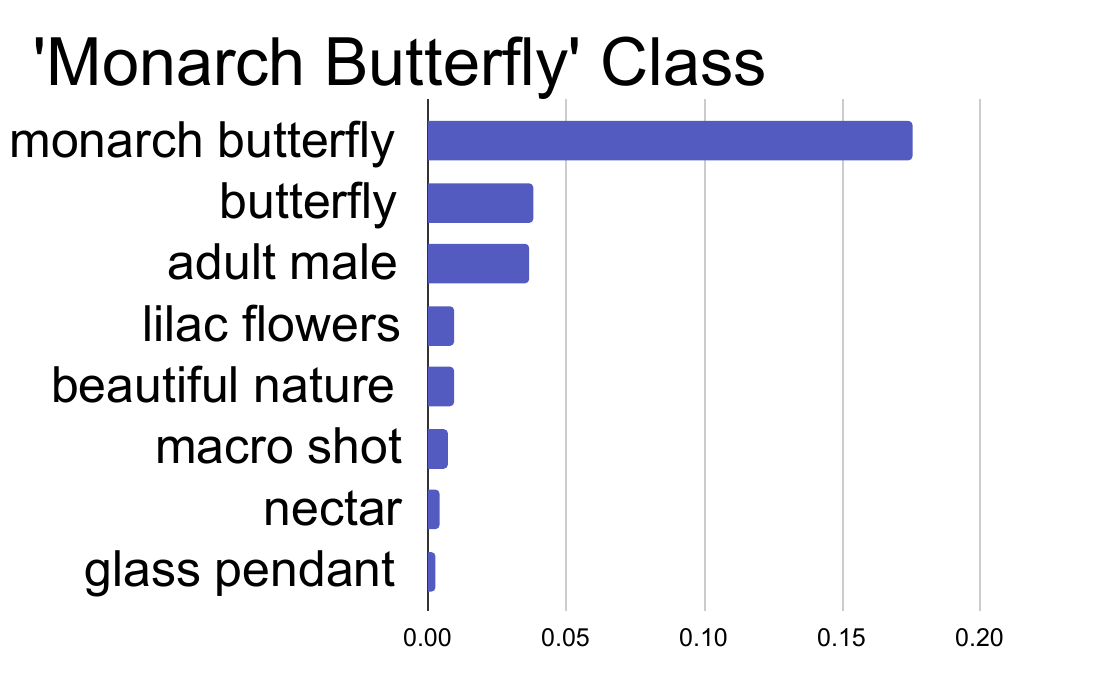}
    \end{subfigure}
    \begin{subfigure}{0.49\columnwidth}
        \includegraphics[width=\columnwidth]{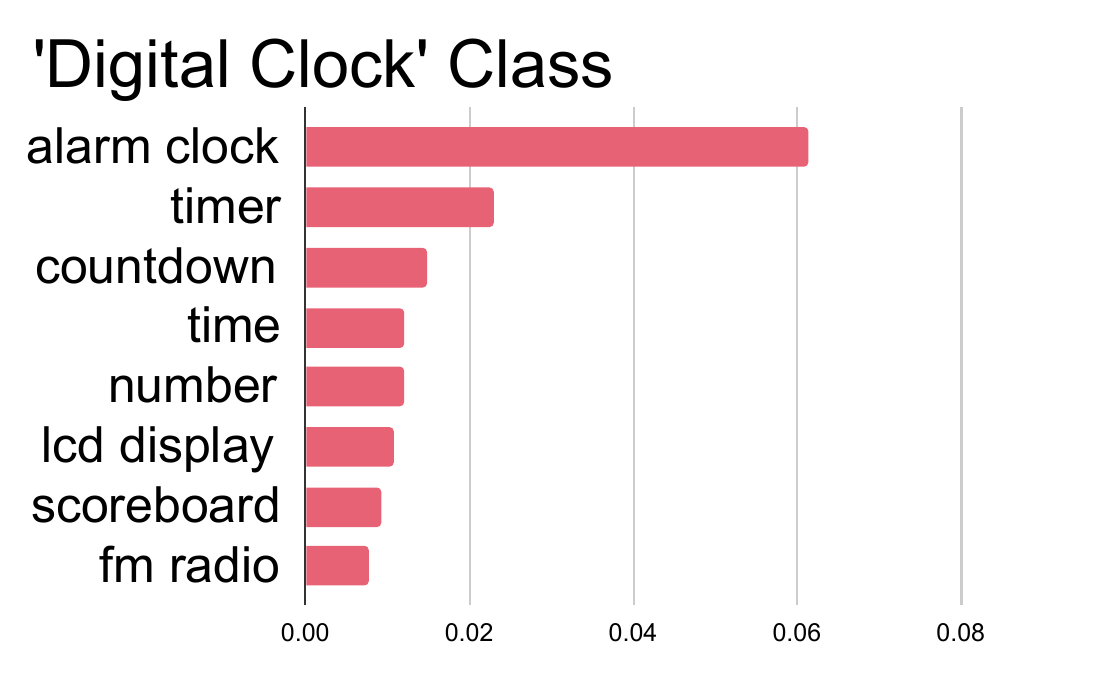}
    \end{subfigure}
    \end{minipage}%
    \begin{minipage}{.45\columnwidth}
    \centering
    \begin{subfigure}{\columnwidth}
        \includegraphics[width=\columnwidth]{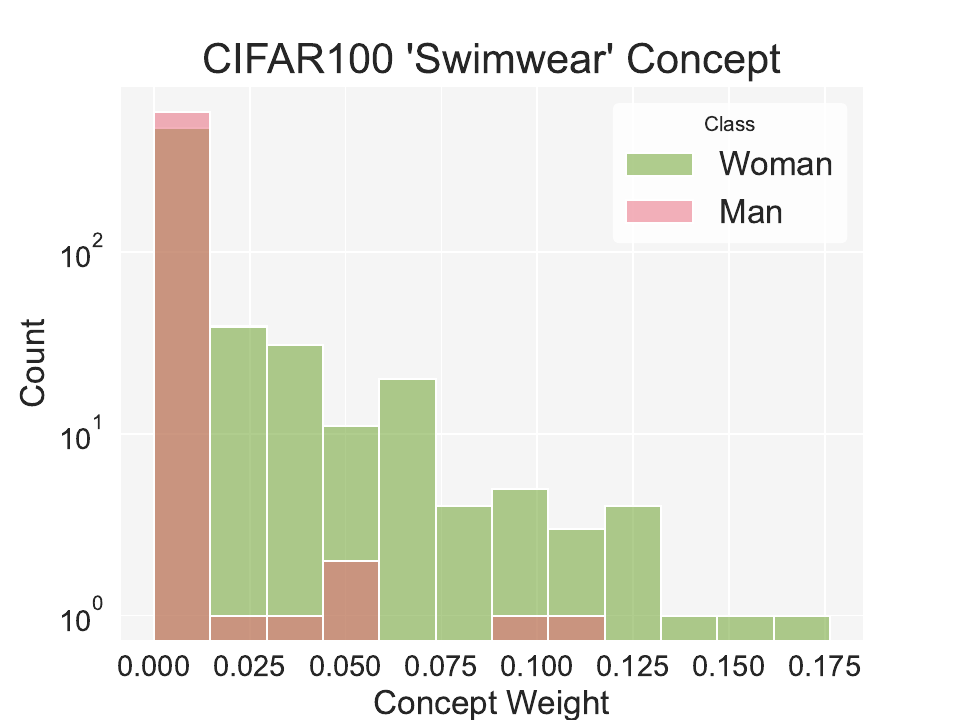}
    \end{subfigure}
    \end{minipage}
    \caption{\textbf{Left:} \method decompositions of ImageNet `\texttt{African Elephant}', `\texttt{Curly-coated Retriever}', `\texttt{Monarch Butterfly}', `\texttt{Digital Clock}' classes. \textbf{Right:} Distribution of ``\texttt{Swimwear}" concept in `\texttt{Woman}' and `\texttt{Man}' classes 
    of CIFAR100. }
    \label{fig:class_decomps_cifar_bias}

\end{figure}

\subsection{Ablation Studies}\label{sec:ablations}

We perform ablation studies to evaluate the effectiveness of the design decisions of \method\!, including the choice of vocabulary, the nonnegativity of the decompositions, and the modality alignment by ablating each choice and observing the effect on three metrics: zero-shot accuracy on CIFAR100, cosine similarity between the reconstructed and original embeddings of CIFAR100, and semantic relevance on MSCOCO. The first two metrics are the same as those presented in Figure \ref{fig:cosine}. We compute semantic relevance by tokenizing and filtering stop-words from the MSCOCO human-generated captions and embedding each token with CLIP. Then, we take all non-zero concepts output by \method and compute the Hausdorff distance between the sets of \method concepts and caption token embeddings. This essentially measures how aligned decompositions are with human captions. 
We observe that replacing our dictionary with the LLM-generated concept dictionary used by \cite{chattopadhyay2023information, panousis2023sparse, oikarinen2023label} significantly worsens the decomposition in terms of zero shot accuracy and cosine reconstruction.
While allowing for negative concept weights improves cosine reconstruction marginally, it decreases the semantic relevance of the decompositions, as negative concepts frequently correspond to concepts not present in images, and as such, are unlikely to be represented by human captions.
Finally, we see that modality alignment is necessary across all three metrics. Overall, these ablation studies show that each aspect of \method is necessary for creating human-interpretable, semantically relevant and highly performant decompositions.

\begin{figure*}
\centering
\begin{subfigure}{0.32\linewidth}
   \includegraphics[width=\linewidth, trim={1ex 0ex 10ex 1ex}, clip]{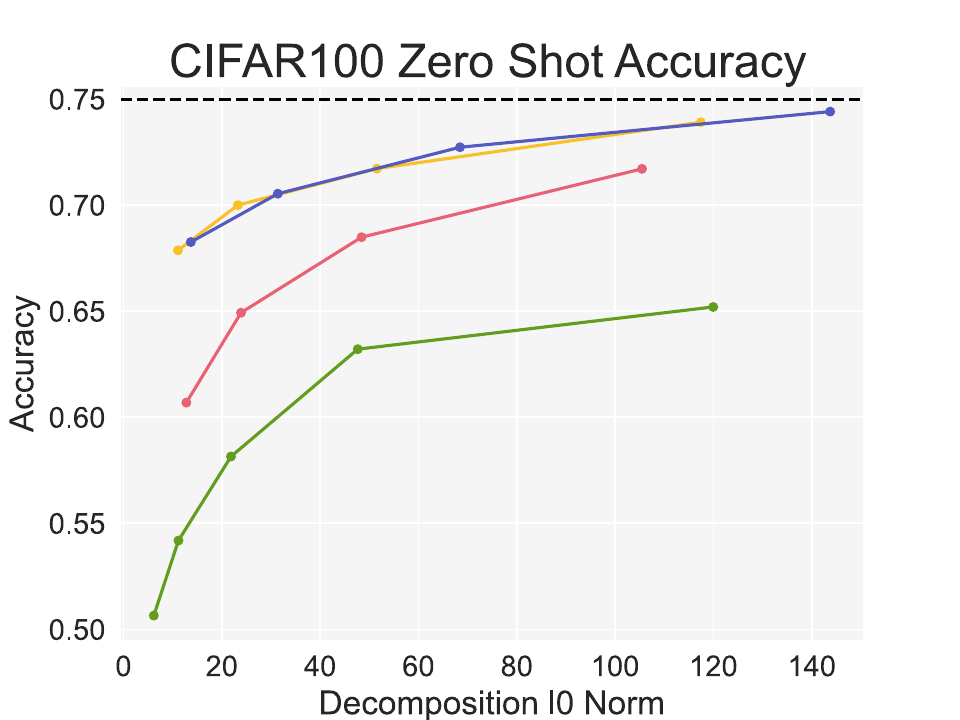}
\end{subfigure}\hfill
\begin{subfigure}{0.32\linewidth}
   \includegraphics[width=\linewidth, trim={1ex 0ex 10ex 1ex}, clip]{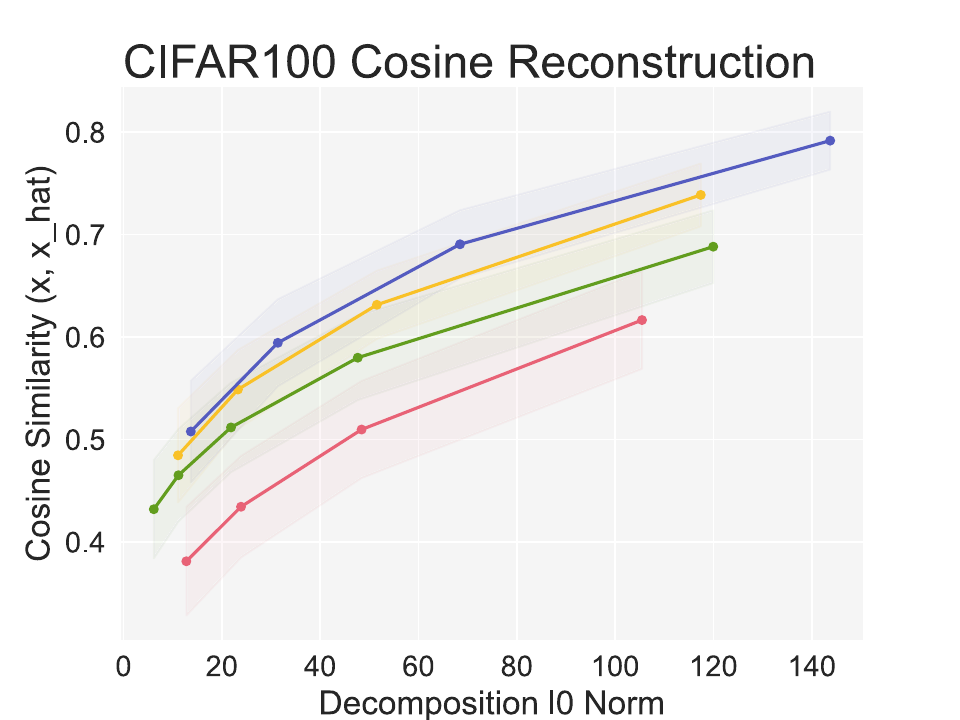}
\end{subfigure}\hfill
\begin{subfigure}{0.32\linewidth}
   \includegraphics[width=\linewidth, trim={1ex 0ex 10ex 1ex}, clip]{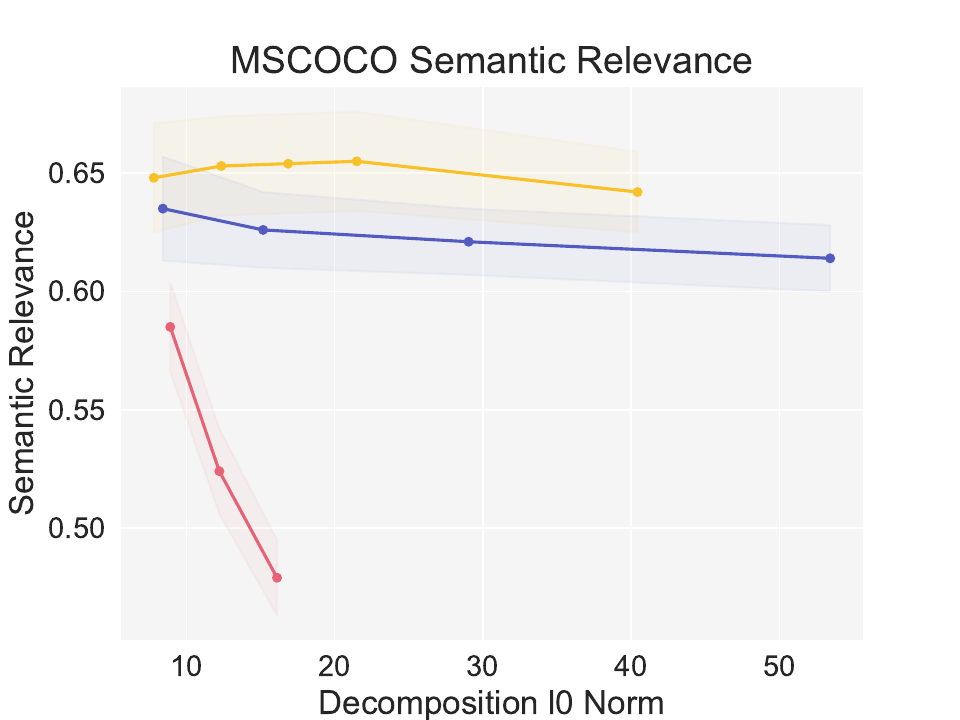}
\end{subfigure}
\begin{subfigure}{0.9\linewidth}
   \includegraphics[width=\linewidth]{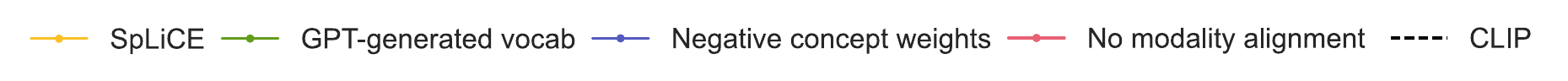}
\end{subfigure}
\caption{Ablation study evaluating the efficacy of \method design choices across three metrics: Zero-shot accuracy, cosine reconstruction, and semantic relevance of recovered tags. We find that all of our design choices, namely non-negativity, modality alignment, and usage of large task-agnostic dictionary are essential to performance. See \S \ref{sec:ablations} for discussion. 
}
\label{fig:ablations}
\end{figure*}

\subsection{Qualitative Assessment of Decompositions}
\textbf{Concept Decompositions for Images.} We visualize \method decompositions to qualitatively assess the semantic content of the images they represent. In Figure \ref{fig:example_decomps} we provide six sample decompositions from MSCOCO with their corresponding captions. We display the top seven concepts for each image and find that they generally well describe the semantics of the images. We also find that these qualitative examples yield interesting and unexpected insights into both CLIP and the data. In the top left image, we see that the decomposition includes the text present on the sign in the image, revealing that CLIP prioritizes text in images over objects. For the bottom left image, the decomposition correctly includes the concept ``\texttt{macro shot}'', revealing that CLIP encodes information regarding geometric perspective. The bottom right decomposition similarly features the concept ``\texttt{blackandwhite bw}'', indicating that CLIP encodes not only the objects present in images but also information about the lighting and color. Overall, these results suggest that \method may also be used as a zero-shot image tagging method to understand images.

\textbf{Concept Histograms for Datasets.}
Beyond concept-based explanations of individual images, we propose that SpLiCE can be used to better understand and summarize collections of images, such as entire datasets. To compute concept decompositions of sets of images, we decompose each individual image and aggregate the results, which we use to generate concept histograms of the dataset. We visualize four concept histograms for the ImageNet classes `\texttt{African Elephant}', `\texttt{Curly-coated Retriever}', `\texttt{Monarch Butterfly}', and `\texttt{Digital Clock}', in Figure \ref{fig:class_decomps_cifar_bias}. These decompositions provide information about the distribution of the data and how CLIP represents it. For example, digital clocks are differentiated from analog clocks through the concepts ``\texttt{lcd display}'' and ``\texttt{countdown}''. Monarch butterflies are highly correlated with the concept ``\texttt{lilac flowers}" in ImageNet, which we validated through manual inspection (nearly half of the monarch butterfly images in the validation set feature purple flowers). Interestingly, `\texttt{Curly-coated retrievers}' are represented as combinations of ``\texttt{poodle}'', ``\texttt{retriever dog}'', and ``\texttt{black dog}'', which perfectly describe the main characteristics of them: black retrievers with poodle-textured fur. 

\section{Case Studies and Applications of \method}
\label{sec:case_studies}

In this section, we present two example case studies using \method\!: (1) spurious correlation and bias detection in datasets and (2) debiasing classification models. We present additional case studies for (1) and (2), as well as (3) monitoring distribution shift in Appendix \ref{sec:app_spurious_correlations}, \ref{sec:app_wb_interv}, \ref{sec:app_dist_shift1} \ref{sec:app_dist_shift2}. We also present results from a user study to evaluate the human interpretability of \method in Appendix \ref{sec:user_study}, where we find that users prefer explanations generated by SpLiCE over existing Concept Bottleneck Model-based methods. 

\label{sec:spurious_detection}
\paragraph{Discovering Spurious Correlations in CIFAR100.}
Existing methods to detect spurious correlations in datasets generally require subgroup and attribute labels or rely on manual human inspection of images (see \cite{wu2023discover} for an overview), making it hard to scale to large datasets. \method\!, on the other hand, allows for fast automatic detection of such biases, without any labels, training, or even a task. To illustrate this, we study two classes of CIFAR100: `man' and `woman', in Figure \ref{fig:class_decomps_cifar_bias}. Upon decomposing these classes, we found that \{``\texttt{bra}'',``\texttt{swimwear}'' \} were two of the top ten most common concepts in the `woman' class. On the other hand, the only clothing-related concepts that appear in the top 50 most activated concepts for `man' are \{``\texttt{uniform}'', ``\texttt{tuxedo}'', ``\texttt{apparel}''\}. We visualize a histogram of the concept weights on swimwear- and undergarment-related concepts \{``\texttt{swimwear}'', ``\texttt{bra}'', ``\texttt{trunks}'', ``\texttt{underwear}''\} across both the train and test sets, and find that these concepts are much more likely to be activated for women than men. Manual inspection of CIFAR100 verifies the trend highlighted by SpLiCE, where \textit{at least 70 of the 600 images in the `woman' class feature women in bikinis, underclothes, or even partially undressed}, revealing stereotype bias in this popular dataset. We provide a similar study of the concept ``\texttt{desert}'' with respect to the  `camel' and `kangaroo' classes in CIFAR100 in Appendix \ref{sec:app_spurious_correlations}.

\label{sec:model_editing}

\paragraph{Model Editing on CelebA Attribute Classifiers.} 

\begin{table}[!h]
    \centering
      \caption{\small Evaluation of intervention on the concept `Glasses' for the CelebA dataset. SpLiCE allows for surgical removal of information related to whether or not someone is wearing glasses, without impacting other features such as gender. (ZS = Zero Shot Accuracy) }
        \label{tab:interv_glasses}
      \begin{small}
        \begin{sc}
        \begin{tabular}{lcccr}
            \toprule
                                & Gender & Glasses \\ 
            \midrule
            ZS CLIP             &  0.98  &  0.91  \\
            ZS SpLiCE           &  0.97  &  0.88  \\
            ZS Intervention SpLiCE &  0.96  &  \textbf{0.69}  \\
            \midrule
            Linear Probe        &  0.89  & 0.88   \\
            Intervention Probe  &  0.85  & \textbf{0.59}   \\
            \bottomrule
        \end{tabular}
        \end{sc}
        \end{small}
\end{table}

Concept-based representations unlock a key application: being able to intervene on and edit models. This edit can be performed in two equivalent ways: either on the concept representations themselves, where we can zero out a concept or on linear probes built upon the decompositions, where we can edit the weight matrix between concepts and class labels (similar to concept bottleneck models \cite{koh2020concept}). Here, we evaluate the efficacy of \method for these forms of model editing. Specifically, we consider two tasks on CelebA, classifying gender and whether the subject is wearing glasses. To test representation editing, we remove the concept of ``eyewear'' or ``glasses'' from CelebA image representations by zeroing out any weight placed on these concepts in our \method decompositions and evaluate classifier performance. We report the performance of zero-shot classification and linear probes over our \method representation in Table \ref{tab:interv_glasses}. In both cases, we find that we can surgically target and remove information pertaining to glasses and reduce classifier performance while preserving information relevant to gender classification. We perform a similar experiment on the Waterbirds dataset \cite{sagawa2019distributionally} to remove spurious background signals in \ref{sec:app_waterbirds}. 
\section{Discussion} \label{sec:discussion}
In this work, we show that the information contained in CLIP embeddings can be approximated by a sparse, linear combination of simple semantic concepts, 
allowing us to interpret representations via sparse recovery. We propose \method\!, a method to transform the dense, uninterpretable embeddings of CLIP into human-interpretable sparse concept decompositions. 

We empirically demonstrate that \method allows for an adjustable tradeoff on the interpretability-accuracy Pareto frontier, enabling users to decide the loss in performance they are willing to incur for interpretability. Furthermore, we find that the improved interpretability of \method allows for users to diagnose and fix model mistakes, ideally increasing the effectiveness and performance of the overall system using a VLM. We then provide concrete use cases for \method\!: spurious correlation detection and model intervention and editing, showcasing the benefits of using interpretable embeddings with known semantic content. We highlight that \method embeddings can serve as post-hoc interpretations of CLIP embeddings and can even replace them to ensure full transparency. 

\paragraph{Broader Impact.} 
Similar to many works in the field of interpretability, our work provides greater understanding of the behavior of models, including but not limited to the broader implicit biases they perpetuate as well as mistakes made on individual samples. We believe this is particularly salient for CLIP, which is used in a variety of applications that are widely used in practice at this moment. We hope that insights gained from such interpretability allow users to make more informed decisions regarding how they interact with and use CLIP, regardless of their familiarity with machine learning or domain expertise in the task they are using CLIP for. We also highlight that \method can be used as a visualization-like tool for exploring and summarizing datasets at scale, allowing for easier auditing of spurious correlations and biases in both datasets and models. 

\paragraph{Limitations.} In this work, we use a large, overcomplete dictionary of one- and two-word concepts, however future work may wish to expand this dictionary or learn a dictionary over tokens (in discrete language space), to capture concepts with more than two words. This may also reduce the size of the dictionary and improve computation time. We note that this dictionary was constructed by looking at token frequency in the LAION-5B dataset, which has its own biases and may not correctly capture all the salient concepts that CLIP encodes. Despite this, we find that \method~performs well on a variety of tasks while outperforming state-of-the-art concept dictionaries (Fig.~\ref{fig:ablations}, Appendix Fig.~\ref{fig:vocab_abl}) and thus we believe LAION is a good dataset to generate a concept vocabulary from. We also note that this vocabulary can be easily modified by practitioners to consider additional concepts as needed for specific use cases.
Finally, \method also uses an $\ell_1$ penalty as the relaxation for $\ell_0$ regularization, but future work may consider alternative relaxations or even binary concept weights.

\newpage
\section*{Acknowledgements and Disclosure of Funding}

This work is supported in part by the NSF awards IIS-2008461, IIS-2040989, IIS-2238714, FAI-2040880, and research awards from Google, JP Morgan, Amazon, Adobe, Harvard Data Science Initiative, and the Digital, Data, and Design (D$^3$) Institute at Harvard. AO is supported by the National Science Foundation Graduate Research Fellowship under Grant No. DGE-2140743, and UB is funded by the Kempner Institute Graduate Research Fellowship. The views expressed here are those of the authors and do not reflect the official policy or position of the funding agencies.

\bibliographystyle{unsrtnat}

\bibliography{bibliography}

\newpage
\appendix

\begin{center}
    \LARGE{\textbf{Appendix}}
\end{center}\normalsize

\section*{Summary of Appendix Results}

\begin{itemize}
    \item \textbf{\ref{sec:appendix_method}}. Further Details on the Method
        \begin{itemize}
            \item \textbf{\ref{sec:proof}}. When do Sparse Decompositions Exist?
            \item \textbf{\ref{sec:optimization_discussion}}. Relationship between cosine similarity and MSE optimization
            \item \textbf{\ref{sec:admm_app}}. ADMM for batched on-device LASSO optimization
            \item \textbf{\ref{sec:app_modality_alignment}}. Effect of Modality Alignment
            \item \textbf{\ref{sec:app_hardware}}. Experimental Details
        \end{itemize}

    \item \textbf{\ref{sec:add_results}}. Additional Results
            \begin{itemize}
                \item \textbf{\ref{sec:user_study}}. User Study for Human Interpretability
                \item \textbf{\ref{sec:app_probing}}. Performance of \method on Probing Tasks
                \item \textbf{\ref{sec:app_retrieval}}. Performance of \method on Retrieval Tasks
                \item \textbf{\ref{sec:add_zs}}. Additional Zero-Shot Results
                \item \textbf{\ref{sec:add_histograms}}. Additional ImageNet Concept Histograms
                \item \textbf{\ref{sec:app_spurious_correlations}}. Additional Case Study: Detecting Spurious Correlations
                \item \textbf{\ref{sec:app_wb_interv}}. Additional Case Study: Spurious Correlation Intervention
                \item \textbf{\ref{sec:app_dist_shift1}}. Additional Case Study: Distribution Shift Monitoring
                \item \textbf{\ref{sec:app_dist_shift2}}. Additional Case Study: Distribution Shift Monitoring
                \item \textbf{\ref{sec:app_negweights}}. Checking the Interpretability of Negative Concepts
                \item \textbf{\ref{sec:app_mean}}. Understanding the Image Mean for Modality Alignment
                \item \textbf{\ref{sec:app_vocab_choice}}. Choice of Concept Vocabulary
                \item \textbf{\ref{sec:app_concept_type_dist}}. Concept Type Distribution
                \item \textbf{\ref{sec:app_clip_rn50}}. Experiments on Alternative CLIP Architecture 
            \end{itemize}

\end{itemize}

\section{Further Details on the Method}
\label{sec:appendix_method}
\subsection{When do Sparse Decompositions Exist?}\label{sec:proof}

\begin{theorem} \label{thm:prop1}
    Given Assumptions 1-5, CLIP image embeddings $f$ can be written as a sparse linear combination of text embeddings, i.e,  
    \begin{align*}
        f(\X^\text{img}) = \mathbf{C}^\text{txt} \W; ~~s.t.~~\| \W \|_0 \leq \alpha
    \end{align*}
    where $\W \in \R_+^k$, and $\mathbf{C}^\text{txt} \in \R^{d \times k}$, which is the text concept dictionary defined previously.
\end{theorem}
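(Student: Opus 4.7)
The plan is to chain the four stated assumptions together in a direct way. Start by using the data-generating process to write $\X^\text{img} = h^\text{img}(\omega,\epsilon)$ for some $\omega \sim \rho$ and $\epsilon \sim \phi$, so that $f(\X^\text{img}) = f \circ h^\text{img}(\omega,\epsilon)$. Then invoke Assumption 3 (linearity in $\omega$) to expand $\omega = \sum_{i=1}^k \omega_i e_i$ along the standard basis of $\R^k$ and push $f \circ h^\text{img}$ through this sum, obtaining
\begin{equation*}
    f(\X^\text{img}) \;=\; \sum_{i=1}^k \omega_i \cdot \bigl(f \circ h^\text{img}\bigr)(e_i, \epsilon).
\end{equation*}

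Next, I would swap each image atom for the corresponding text atom using alignment (Assumption 4): $(f \circ h^\text{img})(e_i,\epsilon) = (g \circ h^\text{txt})(e_i,\epsilon)$. Assumption 2 then guarantees that $(g \circ h^\text{txt})(e_i,\epsilon)$ does not depend on the nuisance $\epsilon$, so for any fixed choice of $\epsilon$ I can define the $i$-th column of the text dictionary as $\mathbf{C}^\text{txt}_i := (g \circ h^\text{txt})(e_i,\epsilon)$. Substituting back yields $f(\X^\text{img}) = \sum_i \omega_i \mathbf{C}^\text{txt}_i = \mathbf{C}^\text{txt}\omega$. Setting $\W := \omega$ gives $\W \in \R_+^k$ by the definition of $\omega$, and $\|\W\|_0 = \|\omega\|_0 \le \alpha$ by Assumption 1, which is exactly the desired conclusion.

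The main obstacle is a subtle interpretation issue rather than a computational one: Assumption 3 asserts linearity of $f \circ h^\text{img}$ in $\omega$, but the generative process is only defined for $\omega \sim \rho$, and $\rho$ is supported on sparse nonnegative vectors. To make the basis expansion rigorous I need to extend $h^\text{img}$ and $h^\text{txt}$ to the standard basis vectors $e_i$ (which correspond to "pure single-concept" images and texts) and appeal to linearity on this extended domain. A clean way to handle this is to state linearity as: there exist linear maps $F, G: \R^k \to \R^d$ such that $f \circ h^\text{img}(\omega,\epsilon) = F(\omega)$ and $g \circ h^\text{txt}(\omega,\epsilon) = G(\omega)$ for all relevant $(\omega,\epsilon)$; alignment then forces $F = G$, and the columns $\mathbf{C}^\text{txt}_i := G(e_i)$ are well defined purely as outputs of the linear operator, independently of whether a literal "pure concept" text exists. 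This reframing, which I would make explicit at the start, sidesteps the domain issue cleanly and makes the remaining four lines of algebra immediate.
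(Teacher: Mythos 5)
Your proof is essentially the same as the paper's, just with a different ordering of assumption applications: you apply linearity (Assumption 3) first while carrying $\epsilon$ along, then alignment (Assumption 4), then use Assumption 2 to argue the result is $\epsilon$-independent, whereas the paper applies Assumption 2 first to drop $\epsilon$ outright, then linearity, then alignment. Both orderings are valid and yield $f(\X^\text{img}) = \mathbf{C}^\text{txt}\omega$ with sparsity from Assumption 1.

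Two small observations. First, the proposition statement references ``Assumptions 1--5,'' and the paper's proof invokes a fifth assumption (that the abstract concept vectors $\mathbf{c}_i^\text{txt} = g \circ h^\text{txt}(\E_i)$ coincide with the actual CLIP word embeddings used to build the dictionary) which is never stated in the main text's list of four conditions. Your proof stops at defining $\mathbf{C}^\text{txt}_i := (g \circ h^\text{txt})(e_i,\epsilon)$, which matches the paper's $\mathbf{c}_i^\text{txt}$ but leaves the identification with the concrete dictionary implicit; this is the same gap the paper elides with its phantom Assumption 5, and is worth surfacing. Second, your remark about the domain issue --- that $\rho$ is supported on sparse nonnegative $\omega$, so the basis vectors $e_i$ may lie outside the domain on which $h^\text{img}, h^\text{txt}$ are defined --- is a genuine subtlety the paper's proof ignores, and your proposed fix (state linearity as the existence of linear operators $F, G$ on all of $\R^k$ with $f \circ h^\text{img}(\omega,\epsilon) = F(\omega)$, then set $\mathbf{C}^\text{txt}_i = G(e_i)$) is cleaner and more rigorous than what appears in the paper.
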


\begin{proof} 
    Any vector $\omega$ can be written as $\omega = \sum_{i=1}^k \omega_i \E_i$, where $\omega_i \in \R_+$, and $\E_i \in \R^k$ is a one-hot vector with one at the $i^\text{th}$ co-ordinate. Thus we have 
    \begin{align*}
        f(\X^\text{img}) = &f \circ h^\text{img}(\omega, \epsilon) = f \circ h^\text{img}(\omega)~~~~(\text{Assumption~2})\\
        = &f \circ h^\text{img}\left( \sum_{i=1}^k \omega_i \E_i \right) = \sum_{i=1}^k \omega_i \underbrace{f \circ h^\text{img}(\E_i)}_{\mathbf{c}^\text{img}_i}~~~~(\text{Assumption~3})
    \end{align*}

    Here we define $\mathbf{c}_i^\text{img} = f \circ h^\text{img}(\E_i)$ as the `image' concept basis vector; analogous to the text concept basis vector $\mathbf{c}_i^\text{txt} = g \circ h^\text{txt}(\E_i)$ already defined. Thus Assumption 2 implies the existence of a sparse decomposition of $f$ in terms of `image' concept vectors $\mathbf{c}^\text{img}_i$. Additionally, \underline{Assumption 1} ensures that this decomposition is sparse, as $\omega$ is sparse. So far, we have $f(\X^\text{img}) = \mathbf{C}^\text{img} \omega~~~s.t.~~ \| \omega \|_0 \leq \alpha$.

    From \underline{Assumption 4}, the image concept vectors and text concept vectors are equal to each other, i.e, $\mathbf{c}_i^\text{img} = f \circ h^\text{img}(\E_i) = g \circ h^\text{txt}(\E_i) = \mathbf{c}_i^\text{txt}$. Finally, from \underline{Assumption 5}, we have that the text concept vectors $\mathbf{c}_i^\text{txt}$ are given simply by word embeddings $g$ of individual words.

    Stringing these arguments together, we have that image representations $f(\X^\text{img})$ can be written as a sparse linear combination of vectors obtain from CLIP word embeddings $\mathbf{c}_i^\text{txt}$. We finally set $\W = \omega$, thus proving the assertion.

\end{proof}

\subsection{Relationship between cosine similarity and MSE optimization.} \label{sec:optimization_discussion}

Recall our $\ell_1$ relaxed cosine similarity optimization problem from Eqn.~\eqref{eq:full_optimization},
\begin{align}
    \min_{\W \in \mathbb{R}_+^c} \|\W\|_0 
    ~~\text{s.t.}~~ \langle \Z, \frac{\C\W}{\|\C\W\|_2} \rangle \geq 1-\epsilon. 
\end{align}
First we relax the $\ell_0$ constraint to an $\ell_1$ penalty.
\begin{align}
    \max_{\W \in \mathbb{R}_+^c} \langle \Z,  \frac{\C\W}{\|\C\W\|_2} \rangle - \lambda\|\W\|_1.
\end{align}
By observing that $||x-y||_2^2 = \langle x-y, x-y \rangle = \langle x,x \rangle + \langle y, y \rangle - 2 \langle x,y \rangle$ and that $\Z$, $ \frac{\C\W}{\|\C\W\|_2}$ are unit-norm, maximizing the above inner product is equivalent to minimizing the euclidean norm,
\begin{align}
    \min_{\W \in \mathbb{R}_+^c} || \frac{\C\W}{\|\C\W\|_2} - \Z||_2^2 + 2\lambda||\W||_1. 
\end{align}
This is a non-convex problem, but we can relax this problem to achieve better reconstruction in terms of euclidan distance as shown in Eqn.~\eqref{eq:optimization},
\begin{align}
    \min_{\W \in \mathbb{R}_+^c} \|\C\W - \Z\|_2^2 + 2 \lambda \|\W\|_1. \label{eq:appendix_optimization}
\end{align}
This problem will optimize euclidean distance between $\C\W$ and $\Z$. Consider two vectors $x,y$ on the unit sphere such that $\langle \frac{x}{||x||}, \frac{y}{||y||} \rangle > 0$. While any vector $\alpha y$, $\alpha > 0$ will have the same cosine similarity score, the optimal vector in terms of euclidean distance to $x$ is the vector $\alpha y$ such that $\alpha = \text{proj}_y(x)$, or in other words the projection of $x$ onto $y$. Thus, solving for euclidean distance to approximate $x$ will find $\alpha y$ which we must then normalize to find the unit-norm solution $y$. This explains the normalizing process described in Section \ref{sec:modality_alignment}.

Additionally, we can view Eqn.~\eqref{eq:appendix_optimization} as applying shrinkage to $\C\W$. Reconverting from euclidean norm to inner product, Eqn.~\eqref{eq:appendix_optimization} becomes
\begin{align}
    \max_{\W \in \mathbb{R}_+^c} \langle \C\W, \Z\rangle - \frac{1}{2}\langle \C\W, \C\W \rangle - \lambda||\W||_1 = \langle \C\W, \Z\rangle - \frac{1}{2}||\C\W||^2_2 - \lambda||\W||_1.
\end{align}
In conclusion, our optimization problem maximizes the inner product while imposing a shinkage penalty and sparsity penalty. Empirically, our reconstructions $\C\W$ are low-norm, so we normalize after solving to recover the unit-norm reconstruction.

\subsection{ADMM for batched on-device LASSO optimization.}
\label{sec:admm_app}
As each decomposition requires solving a LASSO optimization problem, we implement the Alternating Direction Method of Multipliers (ADMM) algorithm in Pytorch over batches with GPU support for efficient decomposition of large scale datasets over large numbers of concepts \citep{boyd2011distributed}. In practice, ADMM achieves primal and dual tolerances of $1\mathrm{e}-4$ in fewer than $1000$ iterations on a batch size of $1024.$ We present an empirical comparison beterrn LASSO and ADMM in \ref{fig:admm_comparison}, where we find both methods to be approximately equivalent. 

Next we derive the iterates for our ADMM algorithm. Recall our optimization problem, 
\begin{align}
    \min_{\W \in \mathbb{R}_+^c} \|\C\W - \Z\|_2^2 + 2 \lambda \|\W\|_1. \label{eq:admm_optimization}
\end{align}
ADMM breaks down convex optimization problems into multiple sub-problems while penalizing the difference in solutions. We break Eqn.~\eqref{eq:admm_optimization} into two subproblems, one solving the euclidean distance objective and one solving the $\ell_1$ and nonnegativity constraint. We let $w$ denote the former solution, $z$ the latter, and $u$ tracks the difference between the two. Our ADMM iterates $(w^k, z^k, u^k)$ are
\begin{align}
    w^{k+1} &= \arg\min_w (f(w) + \frac{\rho}{2}||w^k-z^k+u^k||_2^2), \\
    z^{k+1} &= (S_{\lambda/\rho}(w^{k+1}+u^k))_+, \\
    u^{k+1} &= u^k + w^{k+1} - z^{k+1},
\end{align}
where $S_\kappa$ is a soft-thresholding function used to satisfy the LASSO constraints,
\begin{align}
    S_{\kappa}(a) := \begin{cases}
        a - \kappa, & a > \kappa \\
        0, & |a| \leq \kappa \\
        a + \kappa, & a < -\kappa
    \end{cases}
\end{align}
As our optimization function $f(w)$ is quadratic, we can analytically compute $w^{k+1}$ as
\begin{align}
    w^{k+1} = (2\mathbf{C}^T\mathbf{C} + \rho)^{-1}(\rho v + 2\mathbf{C}w),
\end{align}
where $v = z^k-u^k$. In our experiments we set $\rho=5$, and stop when tolerances $\epsilon_{\mathrm{prim}} = ||x^{k+1}-z^{k+1}||_2$, $\epsilon_{\mathrm{dual}} = ||\rho(z^{k+1}-z^k)||_2$ are less than $1\mathrm{e}-4$. Over a batch, we iterate until every solver in the batch has reached the above tolerances.

\begin{figure*}[h!]
\centering
\begin{subfigure}{0.33\linewidth}
   \includegraphics[width=\linewidth]{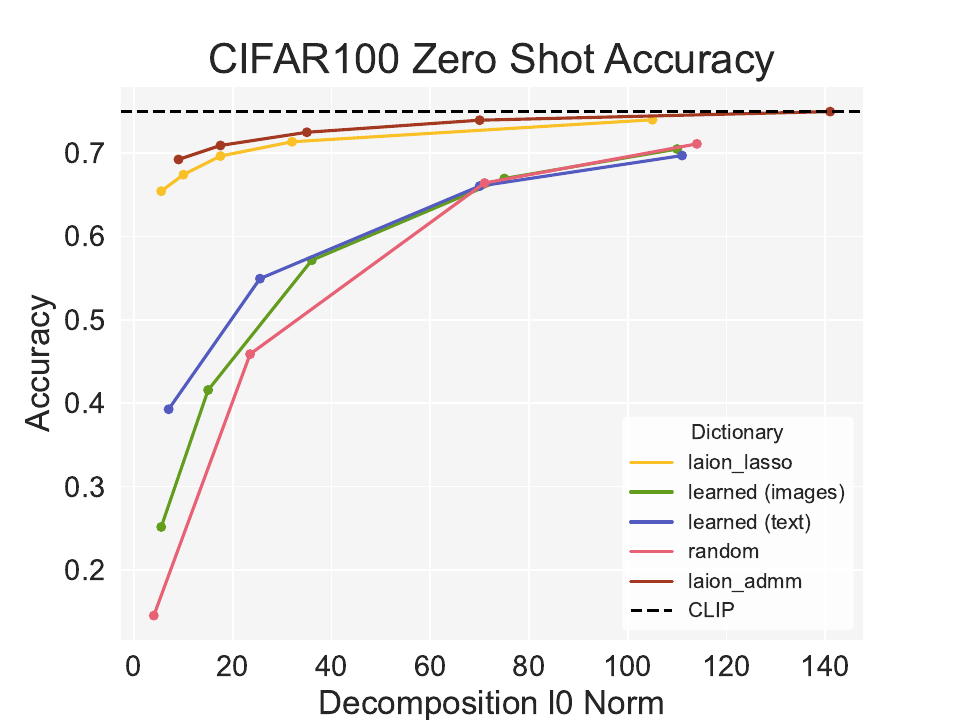}
\end{subfigure}
\begin{subfigure}{0.33\linewidth}
   \includegraphics[width=\linewidth]{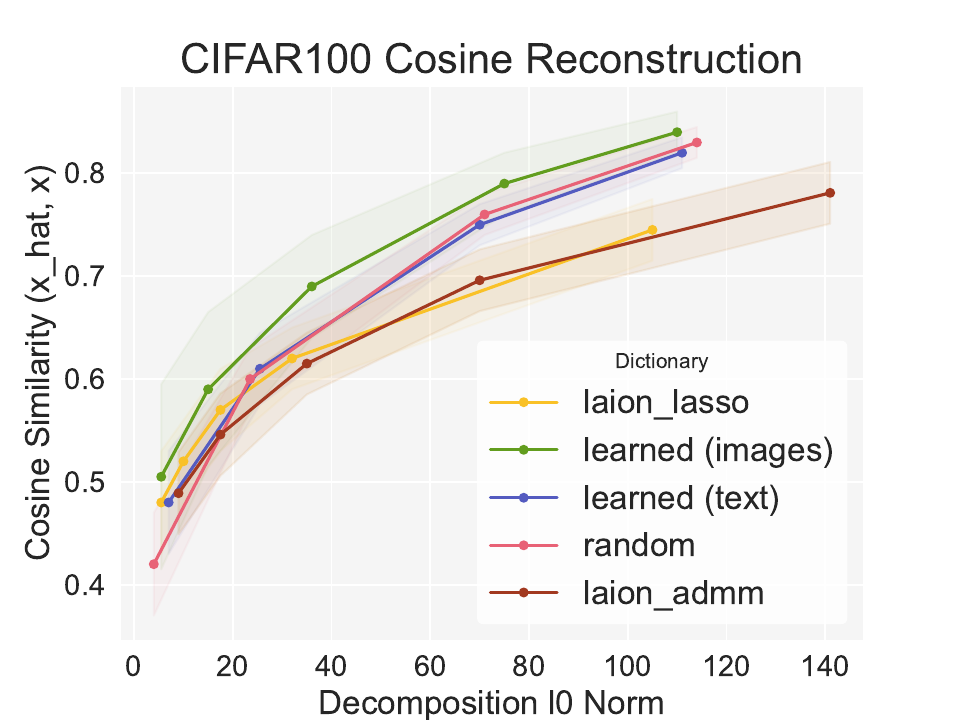}
\end{subfigure}
\caption{Comparison of ADMM (maroon) and LASSO (yellow) for solving the SpLiCE objective on zero shot accuracy (left) and cosine reconstruction (right) on CIFAR100. Both methods are approximately equal. 
}
\label{fig:admm_comparison}
\end{figure*}

\subsection{Effect of Modality Alignment}
\label{sec:app_modality_alignment}
We take MSCOCO images and captions, embed them with CLIP, and compare the cosine similarity between modalities and inter-modality. Before mean-centering and renormalizing, the similarity within modalities is high, with an average of around 0.3. This indicates that the image and text embeddings do not span the entire unit-sphere but rather lie on two cones. However, the similarity across modalities has an average concentrating around zero, indicating that these two cones are non-overlapping. However, after mean-centering and normalizing, we observe that the average cosine similarity for images, text, and between images and text becomes zero and the modalities are aligned. 
\begin{figure}[ht]
   \centering
   \begin{subfigure}[b]{0.35\columnwidth}
       \includegraphics[width=\columnwidth]{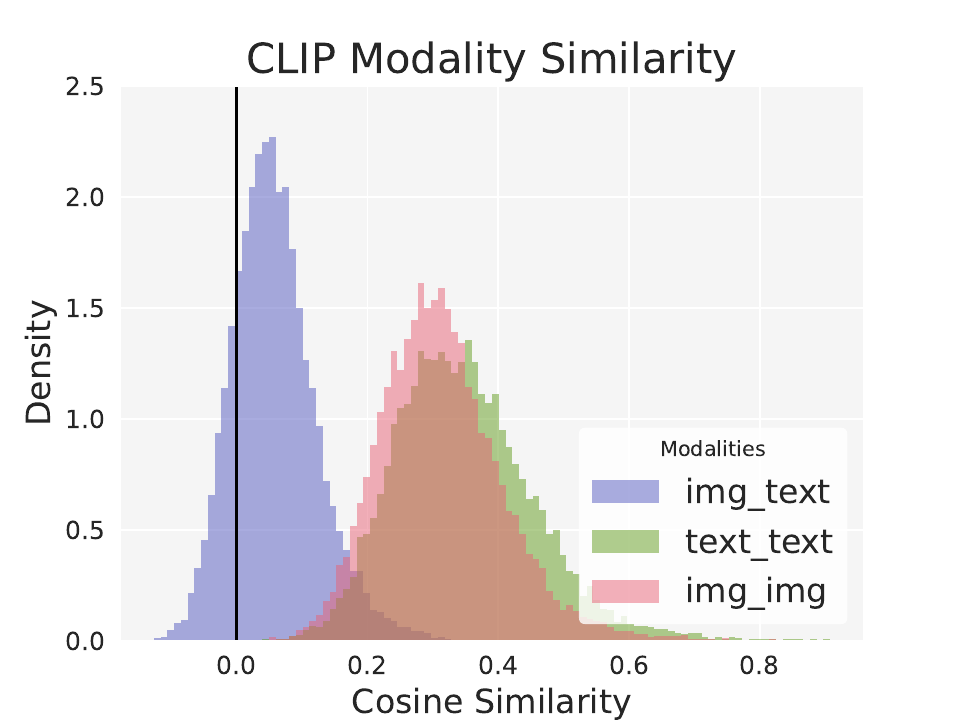}
   \end{subfigure}%
   \begin{subfigure}[b]{0.35\columnwidth}
       \includegraphics[width=\columnwidth]{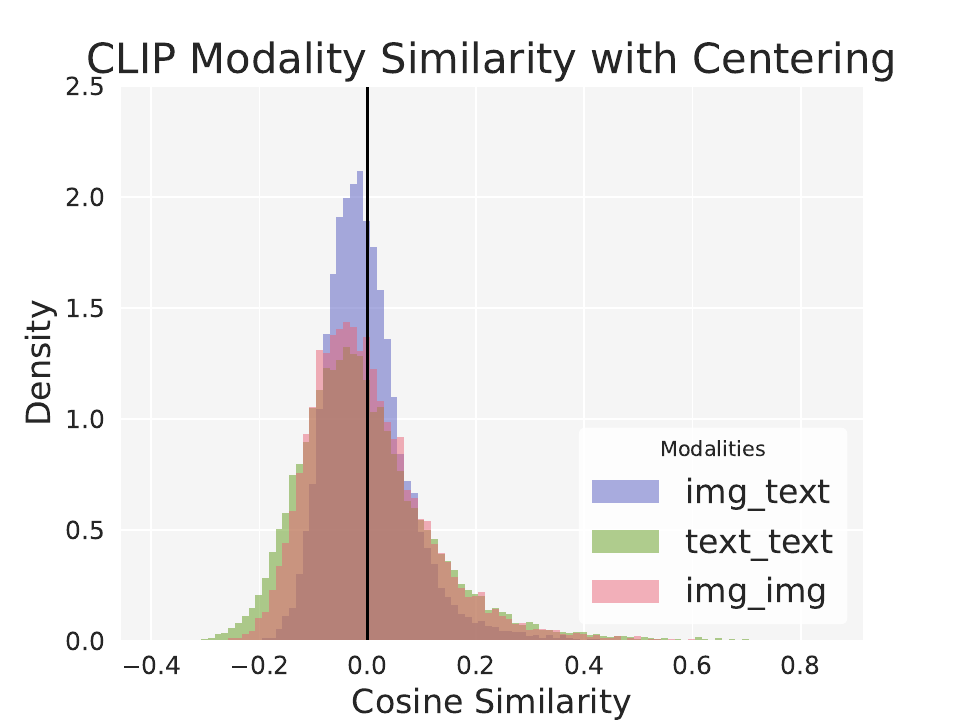}
   \end{subfigure}
   \caption{Average cosine similarity across pairs of image-text, image-image, and text-text data from MSCOCO. After aligning modalities, the distribution of similarities is centered around zero.}
   \label{fig:modality_alignment}
\end{figure}

\subsection{Experimental Details}
\label{sec:app_hardware}

All experiments are able to be performed on a single A100 GPU to run fast inference with CLIP. After embedding the concept dictionary, all computation can be performed on a CPU. Code is made available at \url{https://github.com/AI4LIFE-GROUP/SpLiCE}.  

\section{Additional Results}
\label{sec:add_results}

\subsection{User Study for Human Interpretability}
\label{sec:user_study}

We present results from a user study in \ref{fig:user_study} to assess the human interpretability of SpLiCE. We base our study off of that performed by \citep{oikarinen2023label} to evaluate Label-Free Concept Bottleneck Models (LF-CBMs). We benchmark our method against LF-CBMs and IP-OMP \citep{chattopadhyay2023information}. We provided users with twenty randomly chosen, correctly predicted images from ImageNet and explanations from two different methods comprising the top six most important concepts for every image. We then asked users to evaluate and compare the different concept-based explanations for (1) their relevance to the provided image inputs, (2) their relevance to model predictions, and (3) their informativeness on Likert scales from 1 to 5. We found that users significantly preferred explanations generated by SpLiCE to the two baselines for relevance to the images and informativeness, with significance determined via a one-sample two-sided t-test and a threshold of p=0.01. We also highlight that our method is able to produce similar/better concept decompositions, in terms of human interpretability, than the baselines without needing to train a classification probe or use class labels for concept mining, both of which are computationally expensive. This user study was ruled exempt by our institution's IRB, as no risks were posed to the users. Participants were able to opt out at any time, and no questions were asked regarding the participants themselves. 

\begin{figure*}[h!]
\centering
\begin{subfigure}{0.44\linewidth}
   \includegraphics[width=\linewidth]{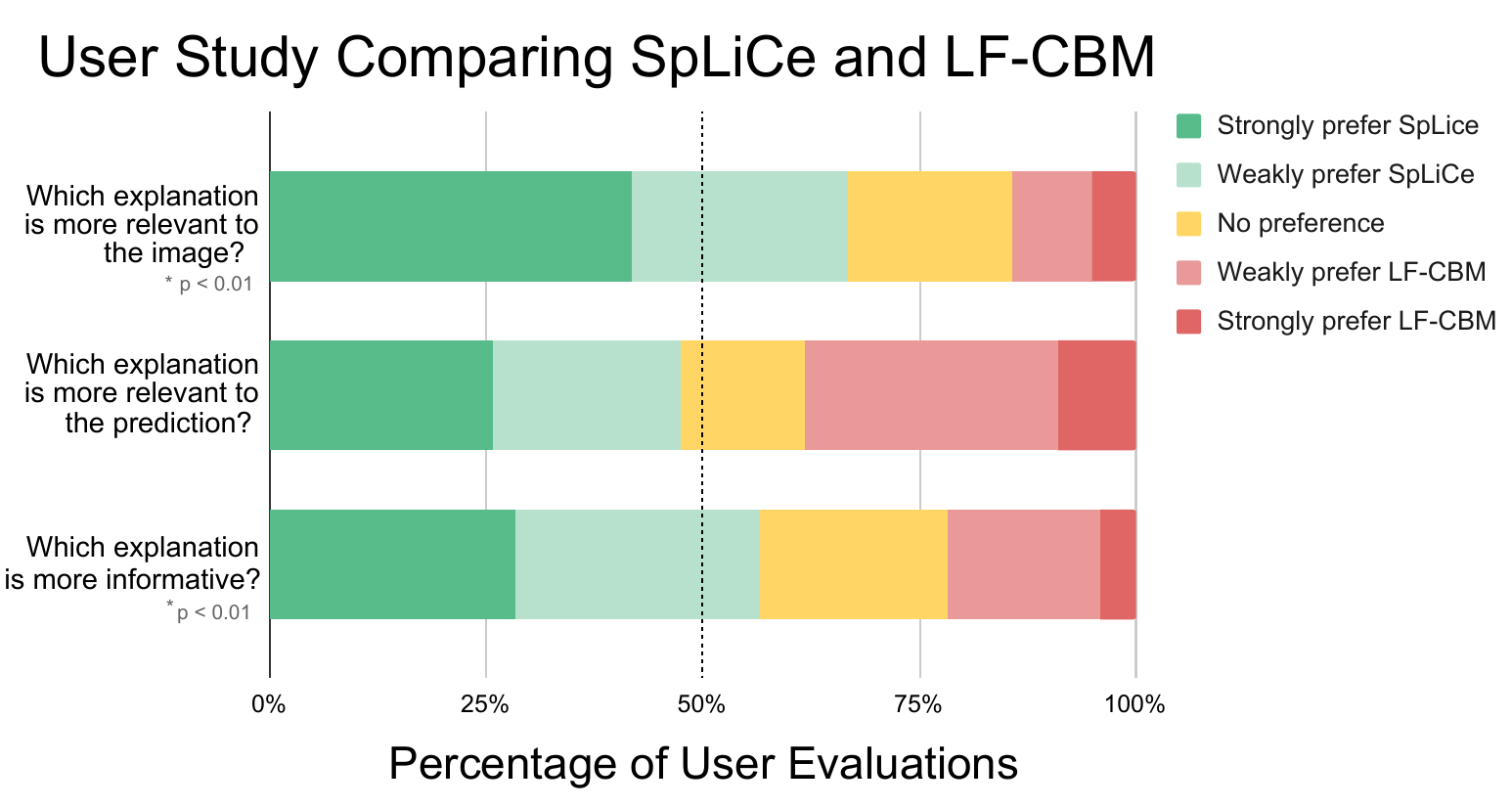}
\end{subfigure}
\begin{subfigure}{0.44\linewidth}
   \includegraphics[width=\linewidth]{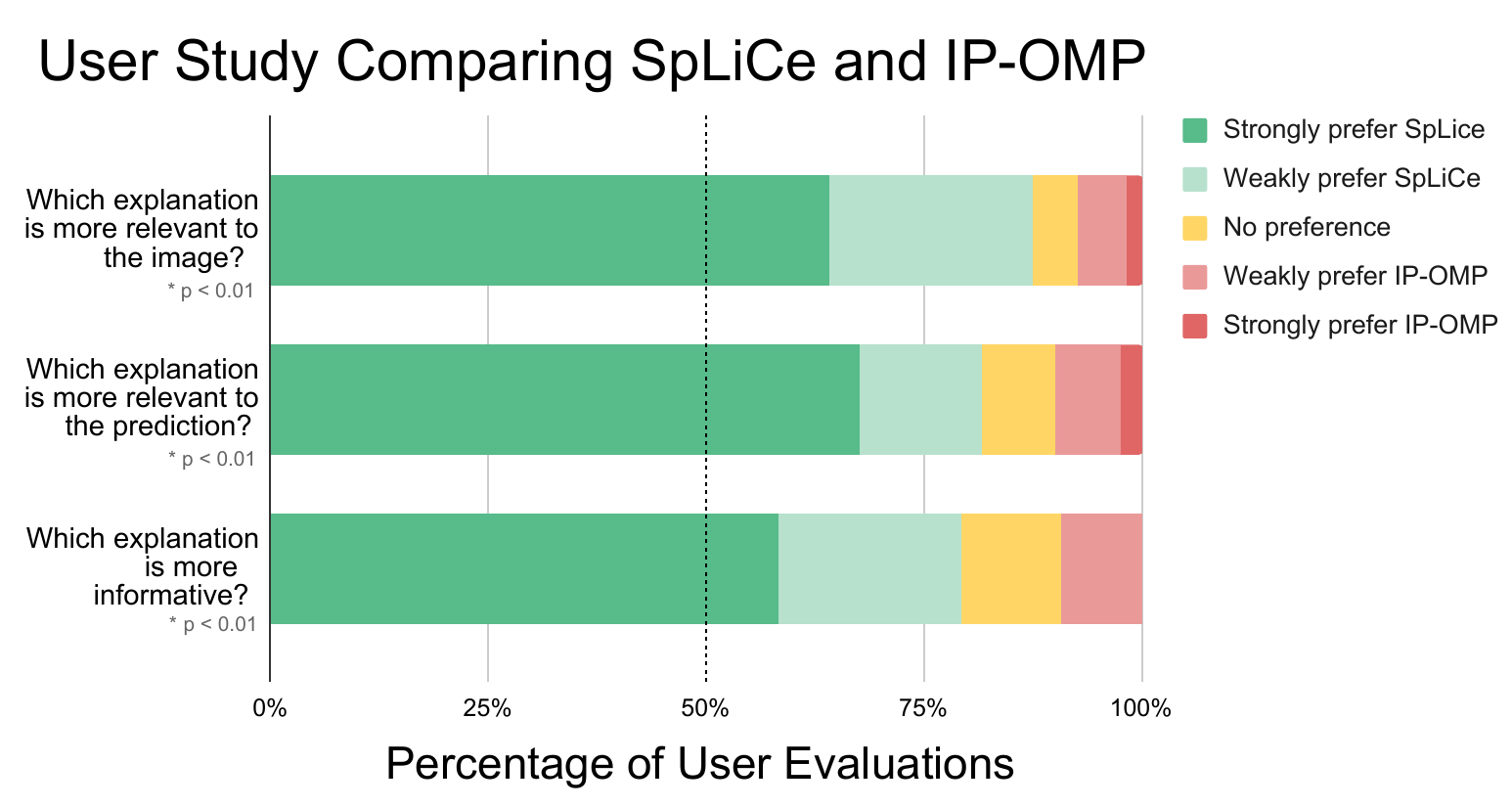}
\end{subfigure}
\caption{
Results of a user study evaluating SpLiCE, LF-CBM, and IP-OMP in the style of the user study from LF-CBM. Overall, we find that explanations generated by SpLiCE are deemed more relevant to the image, relevant to the prediction, and more informative than prior methods. 
}
\label{fig:user_study}
\end{figure*}

\subsection{Performance of \method on Probing Tasks}
\label{sec:app_probing}
We evaluate the performance
of the decompositions on probes trained on both regular
CLIP embeddings as well as decomposed CLIP embeddings for CIFAR100 in \ref{tab:decomp_probe_cifar} and MIT States in \ref{tab:decomp_probe_mit}. We consider two scenarios: a probe trained on CLIP embeddings and tested on \method embeddings of various sparsities (shown in row \texttt{CLIP Probe}), and a probe both trained and evaluated on \method embeddings (shown in row \texttt{\method Probe}). We report mean over three runs, with standard deviations for each experiment being less than 0.005. 
We find that \method representations closely match the performance of dense CLIP embeddings, with a slight drop in performance when probes are trained directly on \method embeddings rather than trained on CLIP embeddings and evaluated on \method embeddings for CIFAR100.

\begin{table}[!h]
\caption{Evaluation of Probing Performance on CIFAR100}
\label{tab:decomp_probe_cifar}
\begin{center}
\begin{small}
\begin{sc}
\begin{tabular}{lcccll}
\toprule
 &
  $l_0$ = 3 &
  $l_0$ = 6 &
  $l_0$ = 23 &
  $l_0$ = 117 &
  CLIP \\ \hline
\method Probe &
  0.95  &
  0.95  &
  0.95  &
  0.95  &
  -- \\
CLIP Probe &
  0.96  &
  0.96  &
  0.97  &
  0.97  &
  0.97  \\
\bottomrule
\end{tabular}
\end{sc}
\end{small}
\end{center}
\end{table}

\begin{table}[!h]
\caption{Evaluation of Probing Performance on MIT States}
\label{tab:decomp_probe_mit}
\begin{center}
\begin{small}
\begin{sc}
\begin{tabular}{lcccl}
\toprule
 &
  $l_0$ = 4 &
  $l_0$ = 7 &
  $l_0$ = 27 &
  CLIP \\ \hline
\method Probe &
  \begin{tabular}[c]{@{}c@{}}0.883       \end{tabular} &
  \begin{tabular}[c]{@{}c@{}}0.883    \end{tabular} &
  \begin{tabular}[c]{@{}l@{}}0.882      \end{tabular} &
  -- \\
CLIP Probe &
  \begin{tabular}[c]{@{}c@{}}0.883     \end{tabular} &
  \begin{tabular}[c]{@{}c@{}}0.883     \end{tabular} &
  \begin{tabular}[c]{@{}c@{}}0.884  \end{tabular} &
  \begin{tabular}[c]{@{}l@{}}0.883      \end{tabular} \\
\bottomrule
\end{tabular}
\end{sc}
\end{small}
\end{center}
\end{table}

\subsection{\method Performance on Retrieval Tasks}
\label{sec:app_retrieval}
We test the performance of \method embeddings on text-to-image and image-to-text retrieval tasks. We evaluate retrieval over various 1024 sample subsets of MSCOCO, and assess recall performance for the top-k closest embeddings of the opposite modality for $k=\{1, 5, 10\}$. 
We find that our semantic concept dictionaries outperform all baselines when decomposition sparsity is high, but that dictionaries learned over images perform slightly better for text to image retrieval when decompositions have greater than 30 nonzero concepts. 

\begin{figure}[!h]
    \centering
    \begin{subfigure}{0.32\columnwidth}
        \includegraphics[width=\columnwidth]{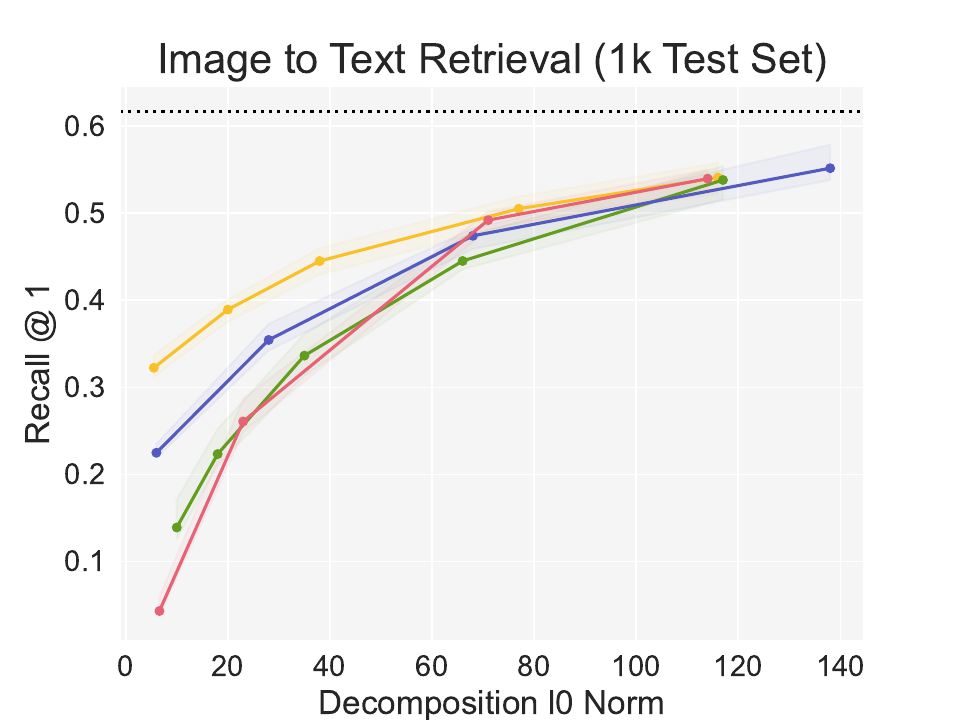}
    \end{subfigure}
    \begin{subfigure}{0.32\columnwidth}
        \includegraphics[width=\columnwidth]{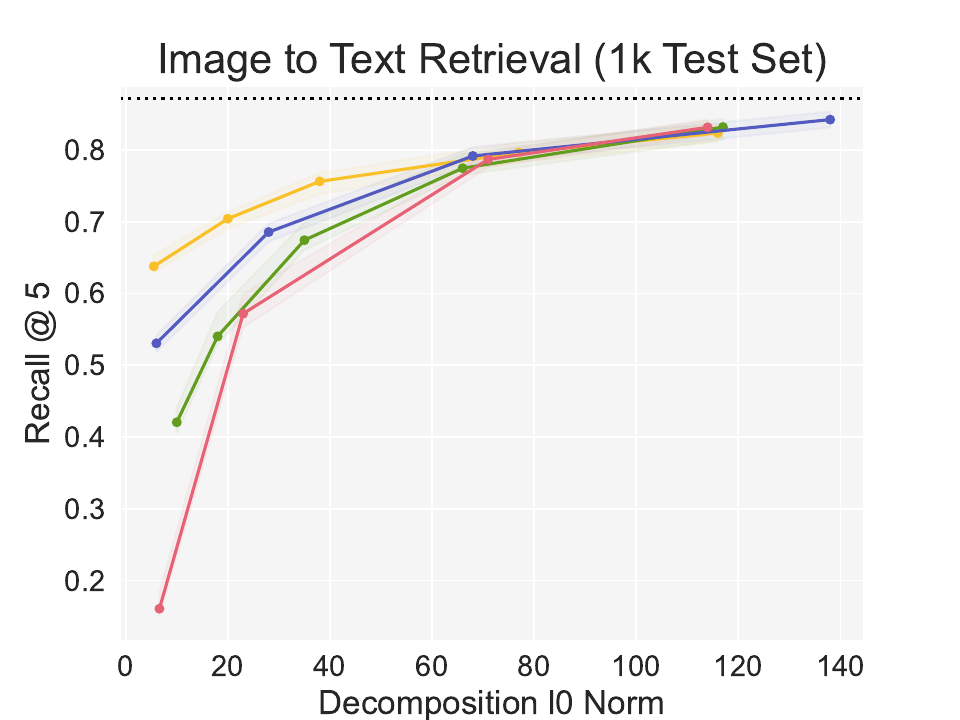}
    \end{subfigure}
    \begin{subfigure}{0.32\columnwidth}
        \includegraphics[width=\columnwidth]{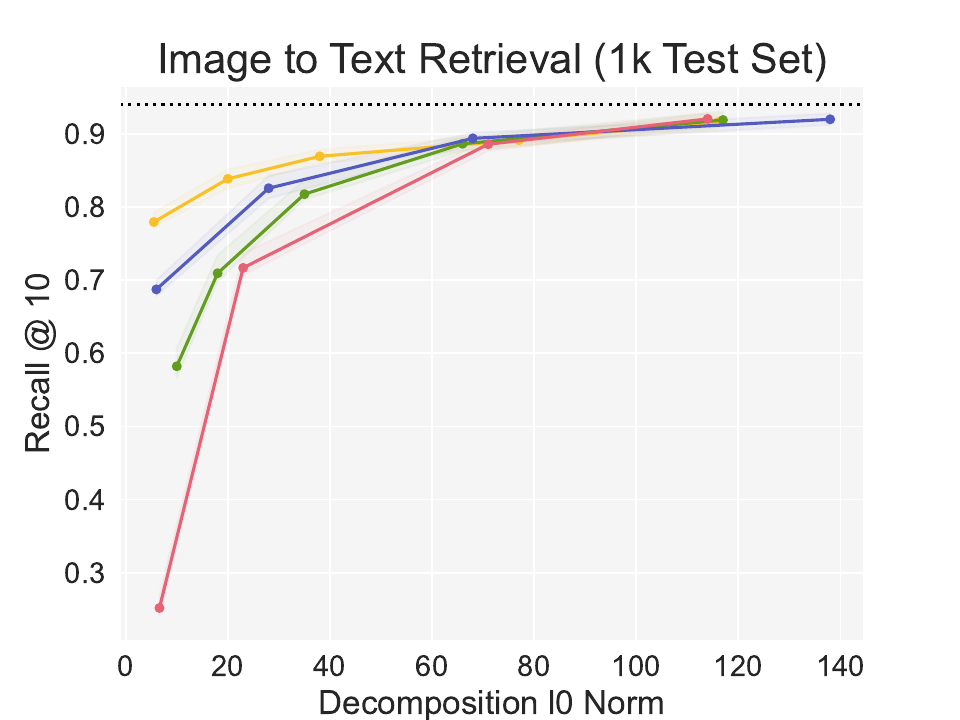}
    \end{subfigure}
    \begin{subfigure}{0.32\columnwidth}
        \includegraphics[width=\columnwidth]{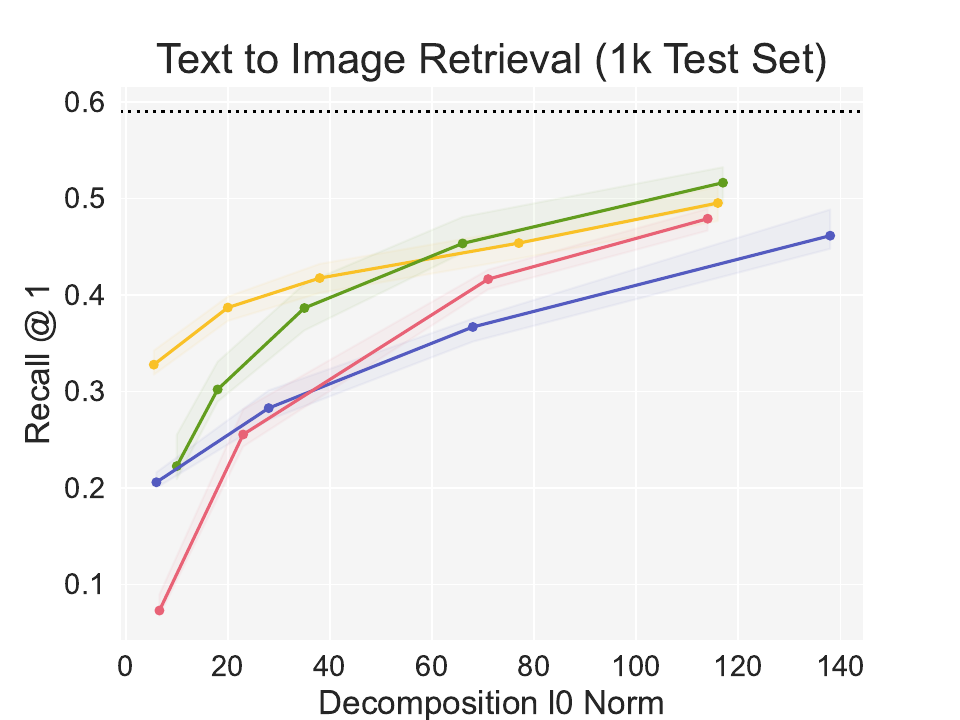}
    \end{subfigure}
    \begin{subfigure}{0.32\columnwidth}
        \includegraphics[width=\columnwidth]{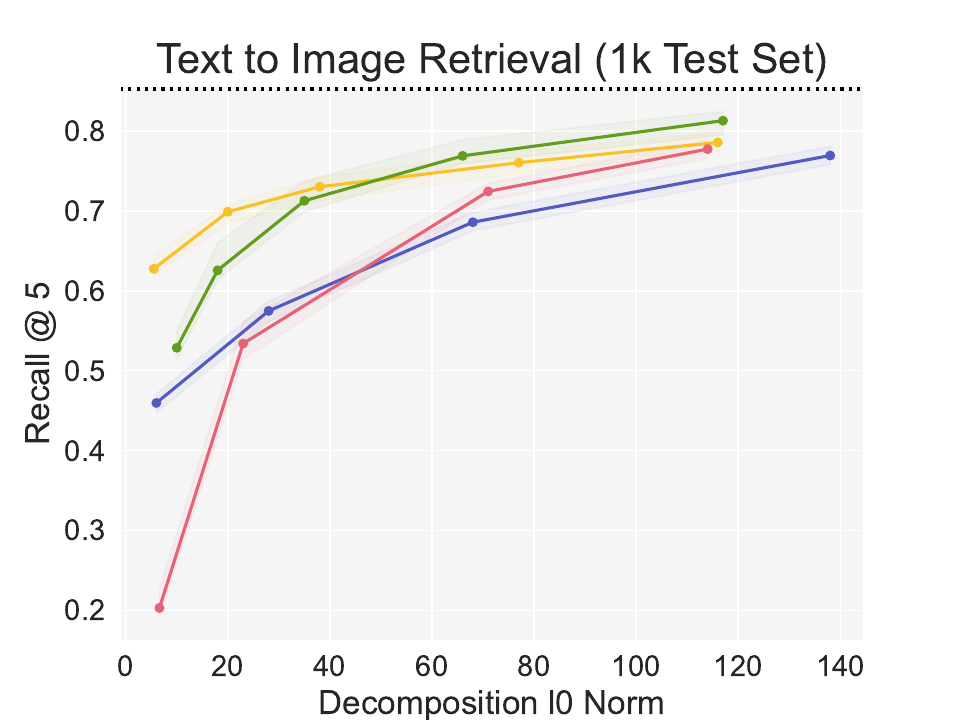}
    \end{subfigure}
    \begin{subfigure}{0.32\columnwidth}
        \includegraphics[width=\columnwidth]{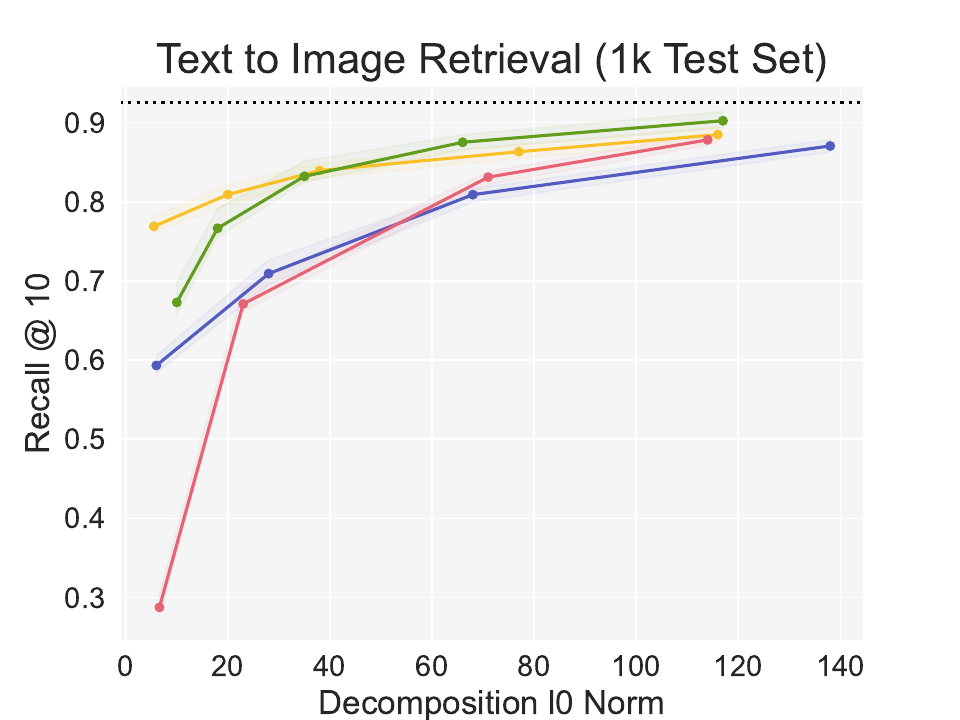}
    \end{subfigure}
    \begin{subfigure}{0.9\linewidth}
   \includegraphics[width=\linewidth]{figures/legend2.pdf}
    \end{subfigure}
    \caption{Top-1 , 5, 10 performance of SpLiCE representations on image-to-text (top) and text-to-image (bottom) retrieval on MSCOCO.}
    \label{fig:retrieval_extra}
\end{figure}

\subsection{Additional Zero-Shot Results}
\label{sec:add_zs}
We present additional results comparing SpLiCE reconstructed vectors and CLIP embeddings on the Caltech101 \citep{fei2006one}, SUN397 \citep{xiao2010sun}, STL10 \citep{coates2011analysis}, and VOC2007 \citep{pascal-voc-2007} datasets in \ref{tab:additional_datasets}. We use SpLiCE decompositions with sparsities of 20-35, and we find that they are comparable to the unaltered CLIP embeddings.

\begin{table}[h!]
        \centering
        \caption{Additional zero-shot accuracy on baselines from the CLIP paper, for decompositions of sparsity 20-35. Note that at human-interpretable levels of sparsity, we see a minor drop in performance.}
        \label{tab:additional_datasets}
        \begin{small}
        \begin{sc}
        \begin{tabular}{lccccr}
            \toprule
                & Caltech101 & SUN397 & STL10 & VOC 2007 \\ 
            \midrule
            CLIP Reported  &  0.88  &  0.63 & 0.97 & 0.83  \\
            CLIP Implemented &  0.90  &  0.67 & 0.96 & 0.92  \\
            SpLiCE &  0.86  & 0.66 & 0.96 & 0.83 \\
            \bottomrule
        \end{tabular}
        \end{sc}
       \end{small}
\end{table}

We further explore the performance of SpLiCE decompositions in the limit as they approach the sparsity of the baseline CLIP embeddings (512). We find that SpLiCE completely recovers CLIP zero-shot accuracy at this limit, as shown in \ref{tab:zero_shot_limit}.

\begin{table}[h!]
        \centering
        \begin{small}
        \begin{sc}
        \caption{Zero shot performance at sparsity 512. Note that SpLiCE completely recovers baseline CLIP zero shot accuracy.}
        \label{tab:zero_shot_limit}
        \begin{tabular}{lccc}
            \toprule
                & CIFAR100 & MITStates & Imagenet  \\ 
            \midrule
            CLIP Baseline &  0.750  &  0.469 & 0.552 \\
            SpLiCE (512) &  0.768  & 0.474 & 0.552 \\
            \bottomrule
        \end{tabular}
        \end{sc}
        \end{small}
\end{table}

\subsection{Additional ImageNet Concept Histograms}
\label{sec:add_histograms}
We present concept histograms for the top seven concepts of five more ImageNet classes: \{`Face Powder', `Feather Boa', `Jack-O'-Lantern', `Kimono', `Dalmation'\}, similar to Figure \ref{fig:extra_clip_decomps}. These decompositions give insights both into the distribution of each class as well as some biases of CLIP. For example, for the class `Face Powder', the concept ``benefit" is the fifth most common concept, and it is indeed a common cosmetic brand name in the images. For the `Dalmation' class, we see that the decompositions consists of concepts relating to dogs and black and white spots, which together make up the high-level concept of a dalmation. Finally, for the class `Kimono', the concept ``doll" is the seventh most common, although all of the images in the `Kimono' class were of real humans, not of dolls. This highlights an implicit bias in CLIP's representations or in the descriptions of people wearing kimonos in CLIP's training set.

\begin{figure}[!ht]
    \centering
        \begin{subfigure}{0.32\columnwidth}
        \includegraphics[width=\columnwidth]{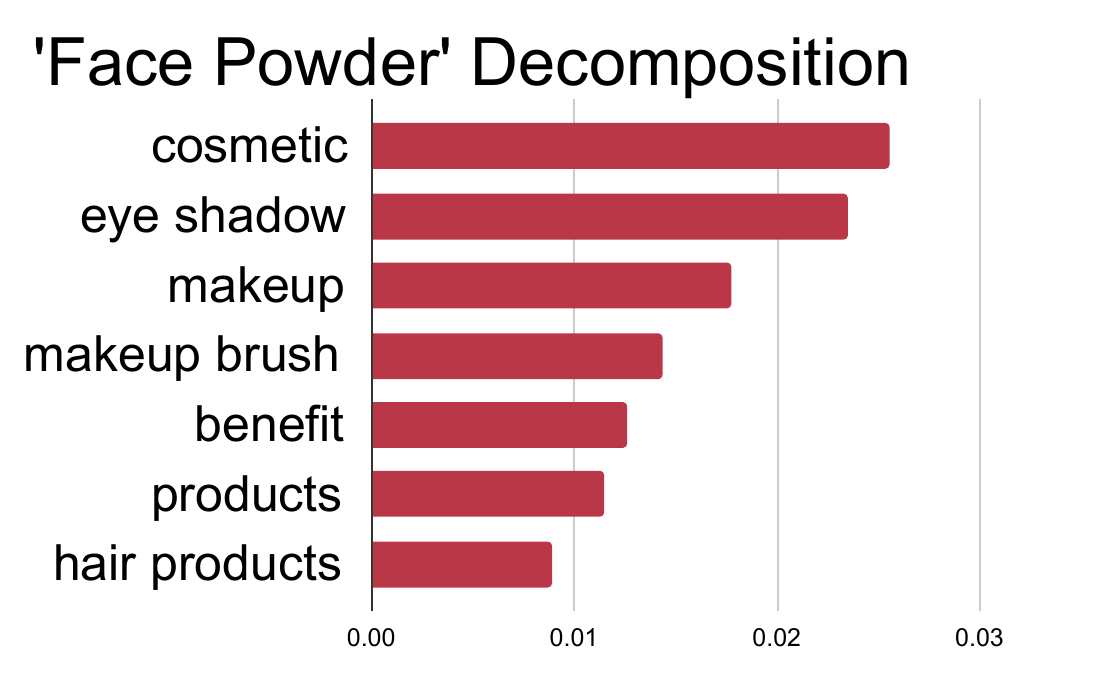}
    \end{subfigure}
    \begin{subfigure}{0.32\columnwidth}
        \includegraphics[width=\columnwidth]{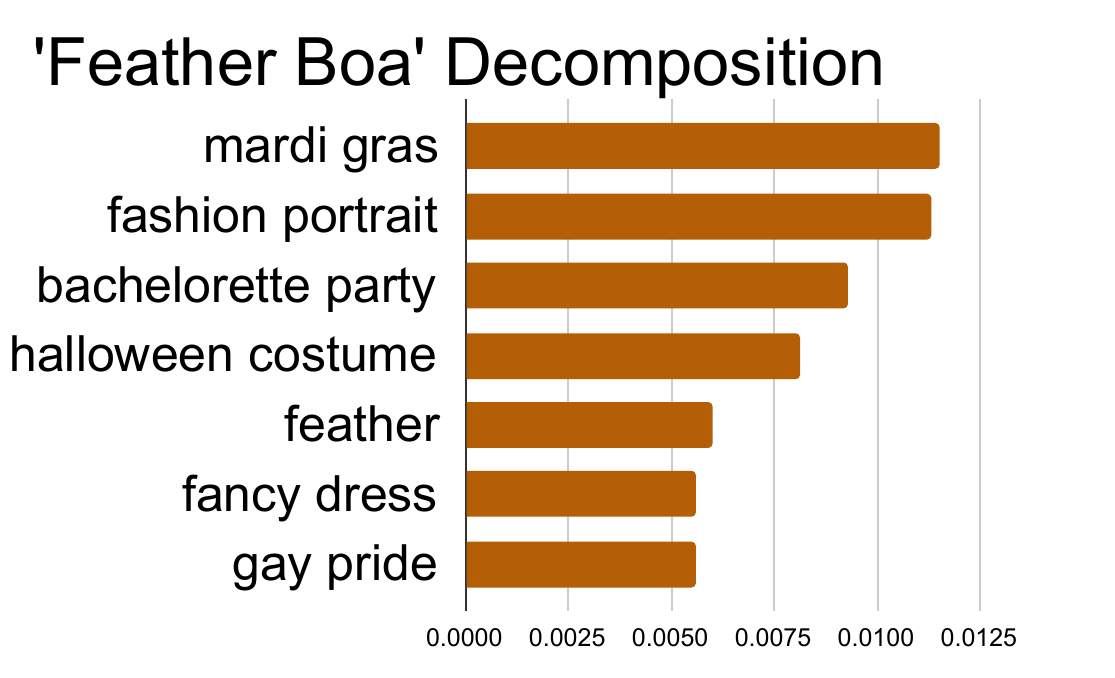}
    \end{subfigure}
    \begin{subfigure}{0.32\columnwidth}
        \includegraphics[width=\columnwidth]{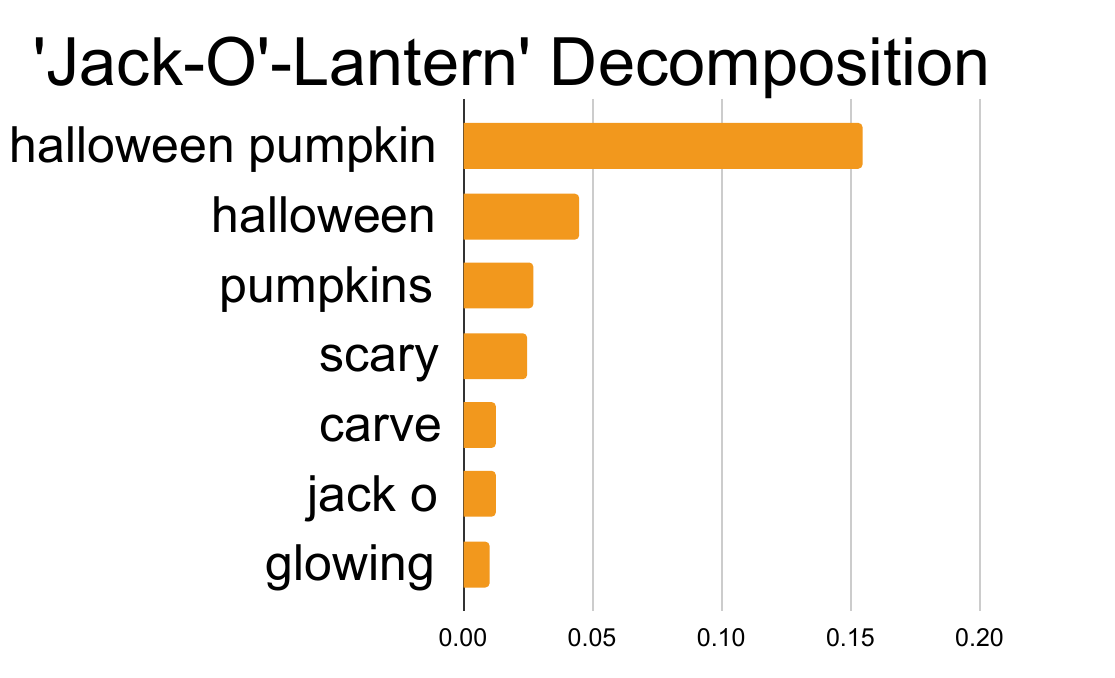}
    \end{subfigure}
    \begin{subfigure}{0.32\columnwidth}
        \includegraphics[width=\columnwidth]{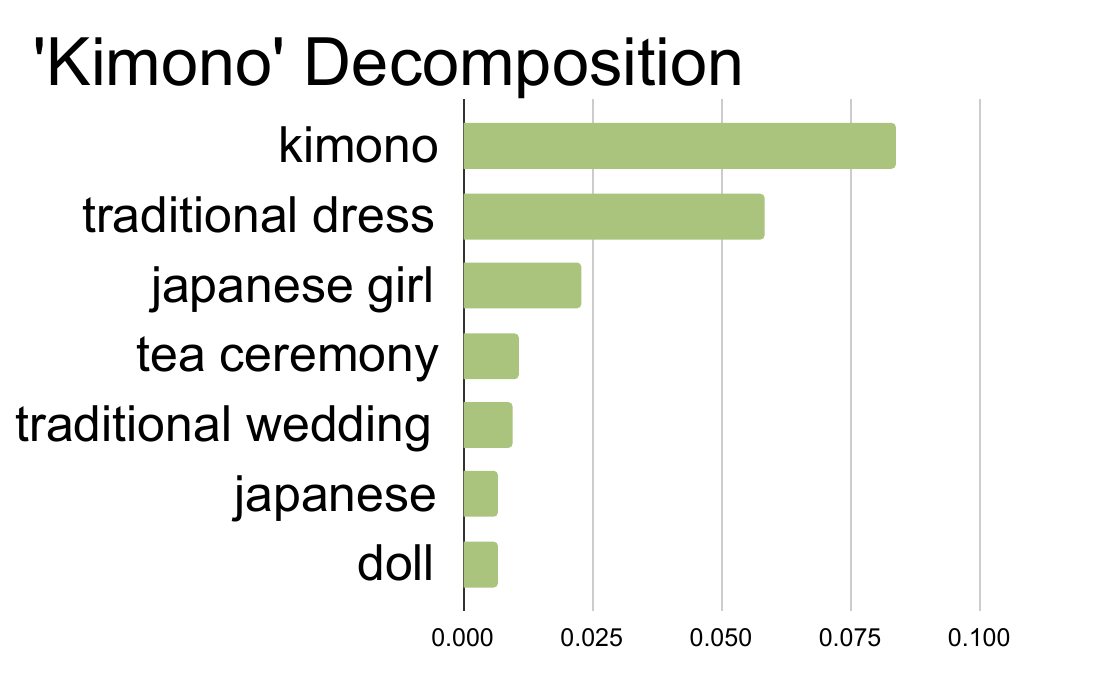}
    \end{subfigure}
    \begin{subfigure}{0.32\columnwidth}
        \includegraphics[width=\columnwidth]{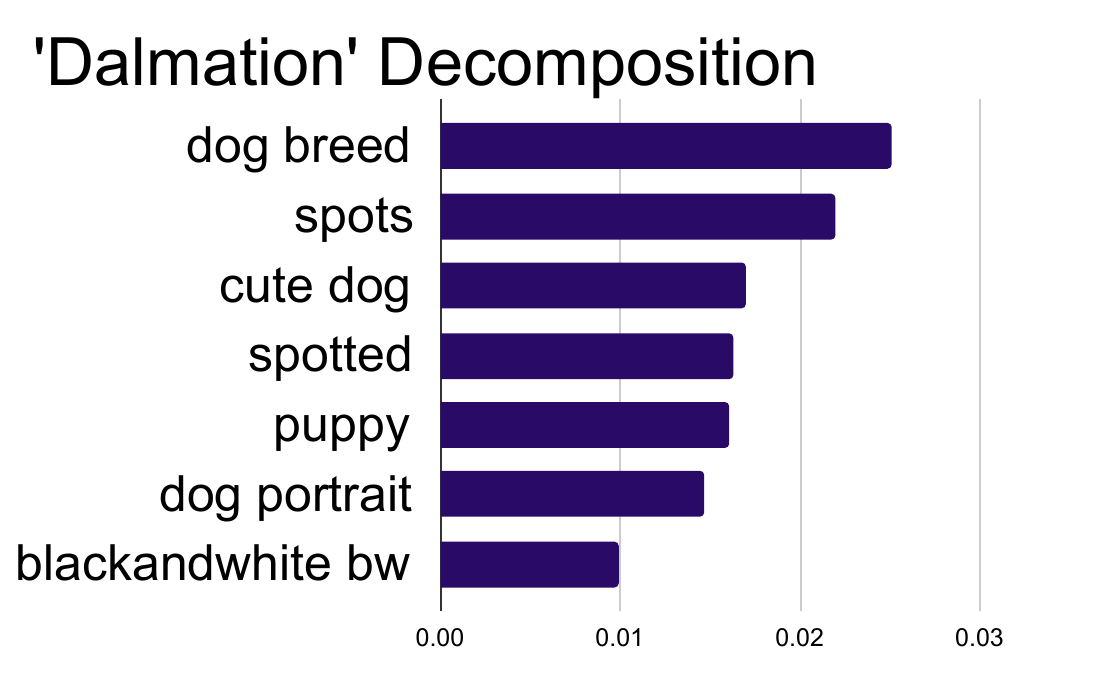}
    \end{subfigure}

    \caption{Example concept histograms of various ImageNet classes. The top seven concepts for each class are visualized along with their relative weighting, with the average $\ell_0$ norm of individual sample decompositions also being 7.}
    \label{fig:extra_clip_decomps}
\end{figure}

\subsection{Additional Case Study: Detecting Spurious Correlations}
\label{sec:app_spurious_correlations}
We present an additional case study for detecting spurious correlations in CIFAR100. In particular, we look at the prevalence of the spurious concept ``desert'' in the classes `camel' and `kangaroo' in Figure~\ref{fig:cifar_desert}. We observe that camels are more frequently pictured in the desert, creating a spurious signal that may be leveraged by downstream classifiers. This figure provides an additional example of how we can understand biases and trends in data with \method~decompositions. 

\begin{figure}
    \centering
    \includegraphics[width=0.58\textwidth]{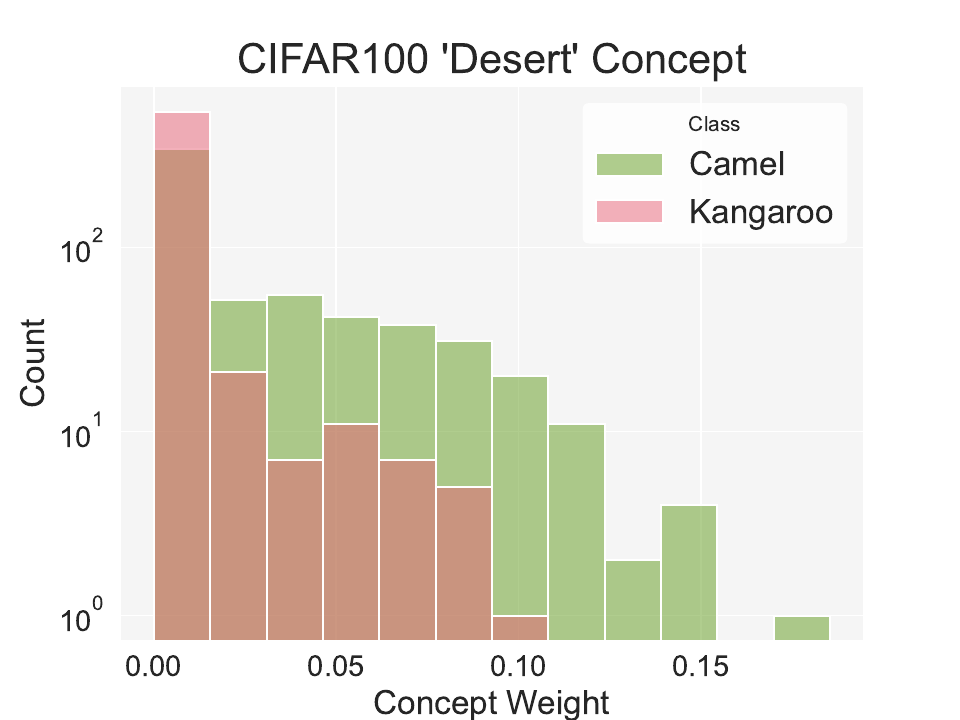}
    \caption{Distribution of ``Desert" concept in `Camel' and `Kangaroo' classes of CIFAR100.}
    \label{fig:cifar_desert}
\end{figure}

\subsection{Additional Case Study: Spurious Correlation Intervention}
\label{sec:app_wb_interv}
We further test the ability of \method to enable intervention on intermediate representations and linear classifiers by attempting to remove information pertaining to spurious signals. In particular, we consider the Waterbirds dataset \citep{sagawa2019distributionally}, which spuriously correlates landbirds with land backgrounds, resulting in trained classifiers performing poorly on waterbirds on land. We thus remove information about whether or not birds are on land backgrounds by ablating concept weights on ``bamboo'', ``forest'', ``hiking'', and ``rainforest'' as well as any bigrams containing the word ``forest,'' as shown in Table \ref{tab:interv_waterbirds}. This significantly improves worst-case subgroup performance for waterbirds on land from 0.48 to 0.60.

For both this experiment and the intervention on CelebA described in the main paper, we train linear probes using the LogisticRegressionClassifier module in scikit-learn using an $\ell_1$ penalty.

\label{sec:app_waterbirds}
    \begin{table}
    \centering
       \caption{\small Evaluation of intervention on spurious correlations for Waterbirds dataset. Removing information about land backgrounds improves worst-case subgroup performance.}
       \label{tab:interv_waterbirds}
       
       \begin{small}
       \begin{sc}
       \begin{tabular}{lll}
           \toprule
            &
             \begin{tabular}[c]{@{}l@{}}Landbirds\\ on land\end{tabular} &
             \begin{tabular}[c]{@{}l@{}}Waterbirds\\ on land\end{tabular} \\ \midrule
           Linear Probe &
             0.98 &
             0.48 \\
           \begin{tabular}[c]{@{}l@{}}Intervention \\ Probe\end{tabular} &
             0.97 &
             \textbf{0.60} \\ \bottomrule
       \end{tabular}
       \end{sc}
       \end{small}
    \end{table}

\subsection{Additional Case Study: Distribution Shift Monitoring}
\label{sec:app_dist_shift1}
We present a final case study using \method to monitor distribution shift. This can help identify differences between training and inference distributions or evaluate how a continually sampled dataset changes over time. In this experiment we consider the Stanford Cars dataset \citep{stark2011fine}, which contains photos of cars from 1991 to 2012, including their make and year labels. By decomposing photos of cars from each year, we can view how the distribution changed yearly. We visualize the weights of the concepts ``convertible" and ``yellow" from our decompositions, as well as the actual percentage of cars from each year that were convertibles or yellow in Figure \ref{fig:cars_dist}. Note the right-hand y-axis, corresponding to the weight of the given concept $c_i$ over the sum of the weights of all concepts $\sum_i c_i$, does not have a meaningful unit of measure or scale. We find that the trends in the groundtruth concept prevalence generally closely match that of the predicted/decomposed concepts, allowing us to visualize which years convertibles or yellow cars were popular or out-of-distribution with respect to other years. Most notably, we see that \method picks up on the out-of-distribution rise in popularity of brightly colored sports cars in the early 2000s.
\begin{figure}[h]
    \centering
        \includegraphics[width=0.58\textwidth]{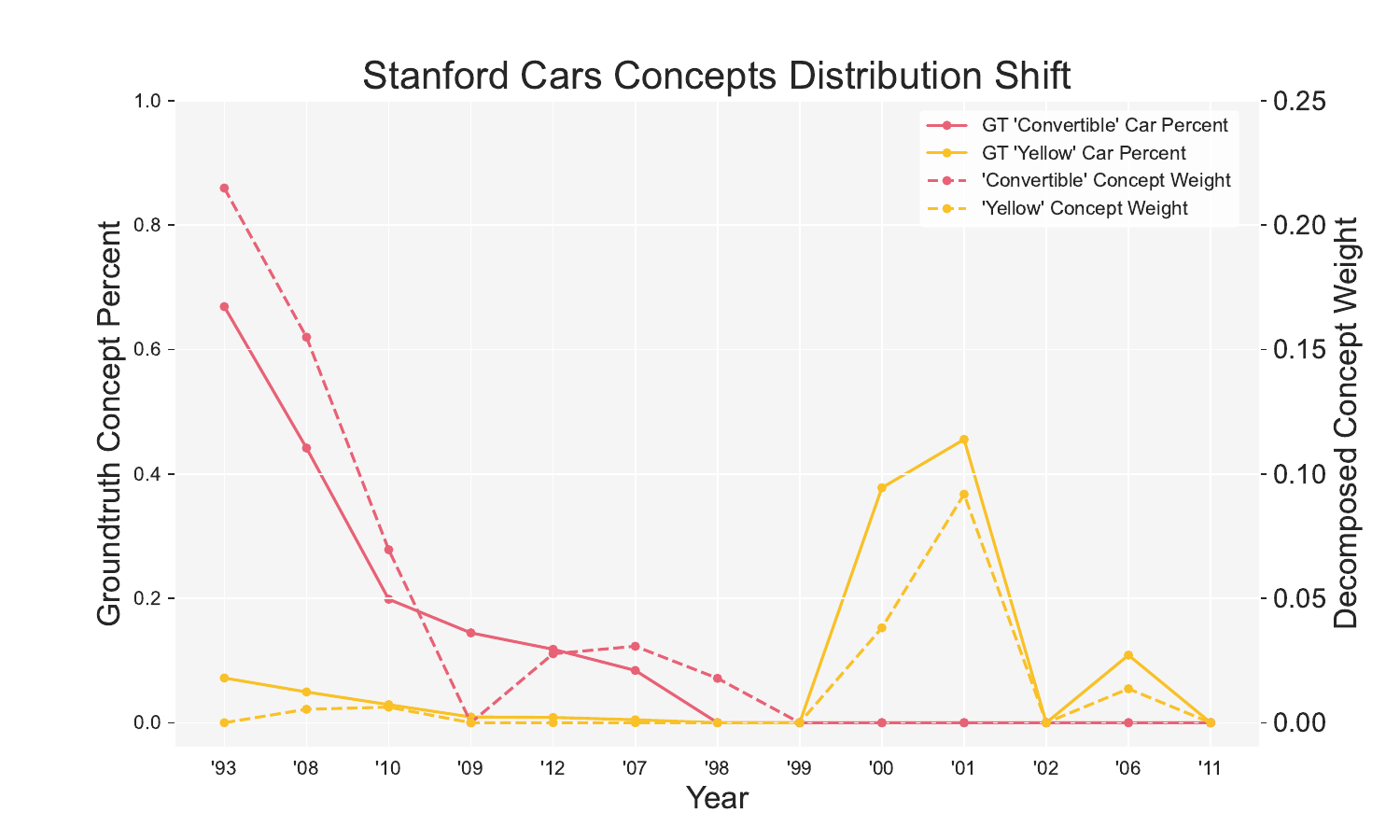}
        \caption{Visualization of the presence of convertibles (pink lines) and yellow cars (yellow lines) in Stanford Cars over time. \method concept weights (dotted) closely track the groundtruth concept prevalence (solid) for both concepts. }
        \label{fig:cars_dist}
\end{figure}

\subsection{Additional Case Study: Distribution Shift Monitoring}
\label{sec:app_dist_shift2}
To further verify that \method allows for identification and tracking of distribution shift, we study the Waterbirds dataset, which is known to have differently balanced train, valodation, and test splits. 
To identify distribution shifts, we can simply look at the norm of the difference between the class decompositions of the two classes for each splot, as shown in \ref{tab:wb1}. We find that the validation and test splits are much more similar than the training and validation splits or the training and test splits, which can be verified by the construction process of the Waterbirds dataset.

\begin{table}[]
\centering
\caption{Study of the differences in distributions between train, validation, and test splits of Waterbirds. The validation and test splits are much more similar to each other than they are to the train split.  }
\begin{small}
       \begin{sc}
\begin{tabular}{llll}
\toprule
                & Train, Val & Train, Test     & Val, Test      \\ \midrule
Class Landbird  & 0.0182     & 0.0182          & \textbf{0.005} \\
Class Waterbird & 0.0229     & .0188 & \textbf{0.009} \\ \bottomrule
\end{tabular}
\end{sc}
\end{small}
\label{tab:wb1}
\end{table}

We also find that the most weighted concept in the ‘landbird’ class of the train split is “bamboo” but the corresponding weight for “bamboo” in the ‘waterbird’ class is much lower. The “bamboo” concept weight for both classes and all splits is shown below, where we see that the validation and test splits are very similar and mostly evenly balanced, whereas the train split is highly unbalanced.

\begin{table}[]
\centering
\caption{Study of the prevalence of the concept ``bamboo'' in the different classes and splits of Waterbirds. }
\begin{small}
       \begin{sc}
\begin{tabular}{llll}
\toprule
                & Train           & Val   & Test  \\ \midrule
Class Landbird  & \textbf{0.0196} & 0.010 & 0.010 \\
Class Waterbird & \textbf{0.0007} & 0.008 & 0.008 \\ \bottomrule
\end{tabular}
\end{sc}
\end{small}
\end{table}

\subsection{Checking the Interpretability of Negative Concepts}
\label{sec:app_negweights}

We take a set of 71 concept-antonym pairs from the MIT States dataset and embed the terms in CLIP. With and without concept centering, we observe that these concept-antonym pairs have an average cosine similarity well above -1, indicating that CLIP does not place antonyms in opposite directions, as shown in \ref{tab:neg_antonyms}. Next, we take our concept dictionary and prepend ``not" to all of the words and compare the average cosine similarity between concept and not-concept pairs. Similarly, we observe that with and without centering, concept and not-concept pairs are highly similar. Note that the average similarity for true pairs of images and text in MSCOCO is less than the similarity between concepts and not-concepts with and without centering.

\begin{table}[h!]
\centering
\caption{Evaluation of the similarity of antonyms and negative concepts in CLIP.}
\begin{small}
       \begin{sc}
\begin{tabular}{lll}
\toprule
 &
  \begin{tabular}[c]{@{}l@{}}Pairwise Cosine Similarity\\ (without concept centering)\end{tabular} &
  \begin{tabular}[c]{@{}l@{}}Pairwise Cosine Similarity \\ (with concept centering)\end{tabular} \\ \midrule
Concept and antonym &
  0.7176 $\pm$ 0.1109 &
  0.1366 $\pm$ 0.2197 \\
Concept and “not” concept &
  0.8661 $\pm$ 0.0498 &
  0.6130 $\pm$ 0.0498 \\ \bottomrule
\end{tabular}
\end{sc}
\end{small}
\label{tab:neg_antonyms}
\end{table}

\subsection{Understanding the Image Mean for Modality Alignment}
\label{sec:app_mean}
In order to empirically check that the mean centering of images does not result in a loss of information, we decompose the img mean, $\mu_{img}$, that we used for all experiments. If we decompose it with uncentered concepts, the following concepts are highlighted: \{``closeup", ``flickr", ``posed"\}. The decomposition with centered concepts results in the following concepts: \{``flickr", ``posed", ``pics", ``angle view", ``last post"\}. These concepts all seem to be generally related to images, with minimal other semantic information, suggesting that centering does not remove any discriminative semantic content of embeddings, but simply removes information about the modality.

\subsection{Choice of Concept Vocabulary} 
\label{sec:app_vocab_choice}
We perform a simple ablation study to assess the sensitivity of our method to choices in concept vocabulary. We collect a second vocabulary in the same exact manner as the LAION vocabulary from the MSCOCO caption dataset. We consider both the top 10k and top 5k most common words for both, and repeat the zero-shot accuracy and reconstruction cosine similarity experiments from Section \ref{sec:perf} on CIFAR100. We see that the MSCOCO10k and LAION10k vocabularies perform almost exactly the same for both metrics. The smaller vocabularies perform the same for cosine reconstruction but underperform the 10k vocabularies for zero-shot classification tasks. 

    \begin{figure}[h]
        \centering
        \begin{subfigure}{0.45\columnwidth}
        \centering
            \includegraphics[width=\columnwidth]{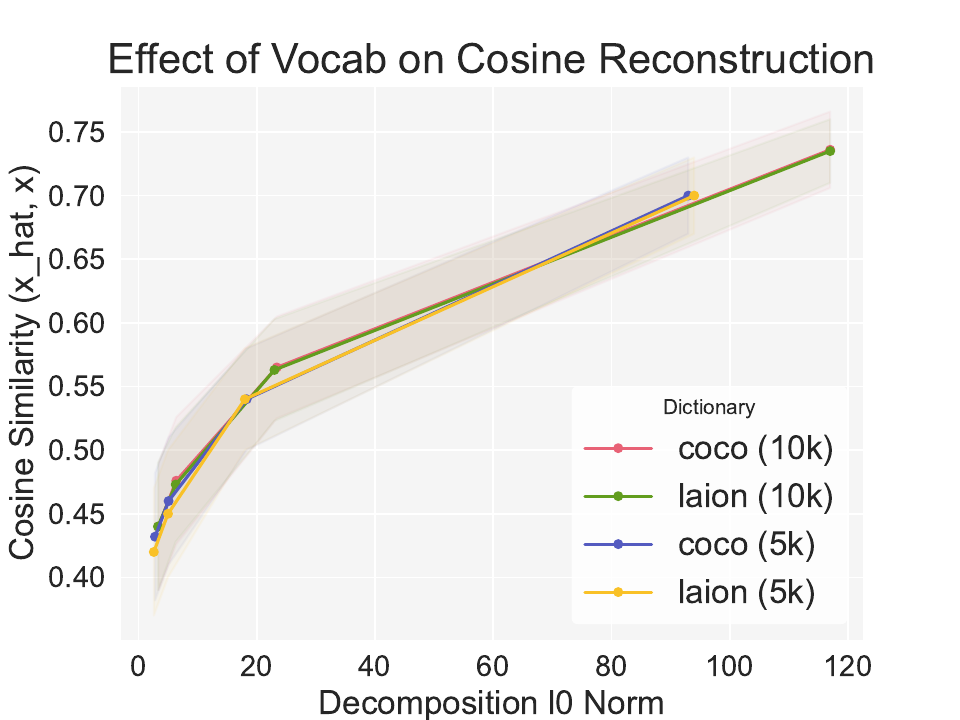}
        \end{subfigure}
        \begin{subfigure}{0.45\columnwidth}
        \centering
            \includegraphics[width=\columnwidth]{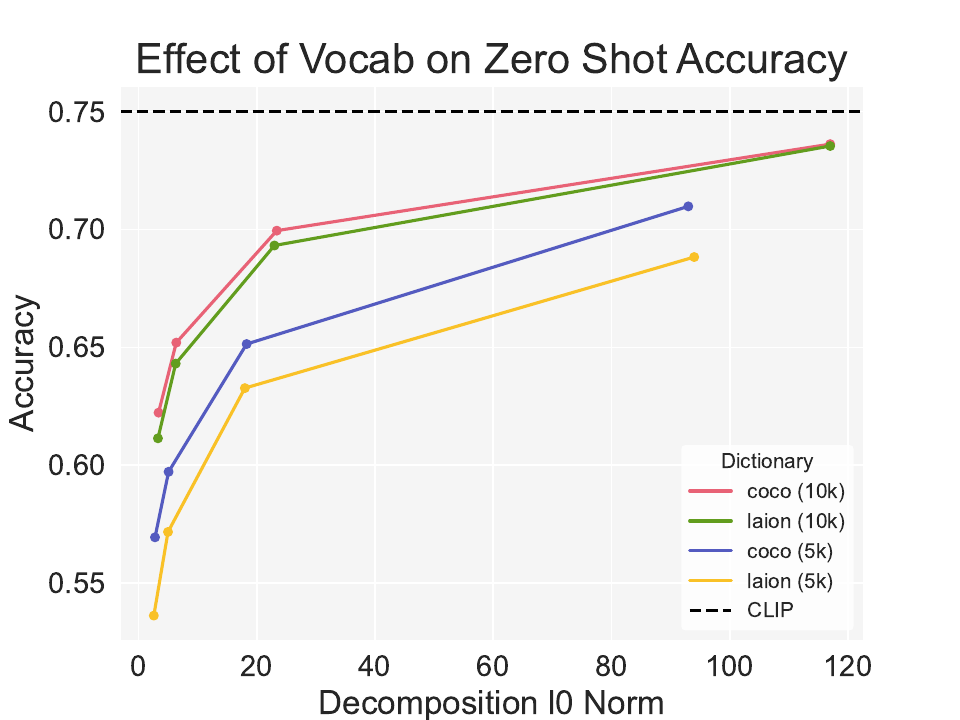}
        \end{subfigure}
        \caption{Change in \method performance when considering another semantic concept dictionary derived from MSCOCO as well as a smaller concept vocabulary.}
        \label{fig:vocab_abl}
    \end{figure}

\subsection{Concept Type Distribution}
\label{sec:app_concept_type_dist}
In order to better understand any biases produced by the decomposition process or that CLIP itself has, we visualize the types of concepts most commonly activated across multiple datasets, labelling them by part of speech in Figure \ref{fig:concept_pos}. We see that nouns are by far the most common concepts across datasets, indicating that both CLIP and the decompositions are highly object centric. Note that the low weight on verbs and adjective is due to far fewer concepts of those types being activated (low $l_0$ norm) as well as the weight upon those concepts being significantly smaller (low $l_1$ norm). We hypothesize that the information in many adjective and verbs can actually be encoded into the noun itself, resulting in this phenomenon. For example, the concept ``lemon" is a more succinct form of ``yellow" and ``fruit". 

    \begin{figure}[h]
    \centering
        \includegraphics[width=0.5\columnwidth]{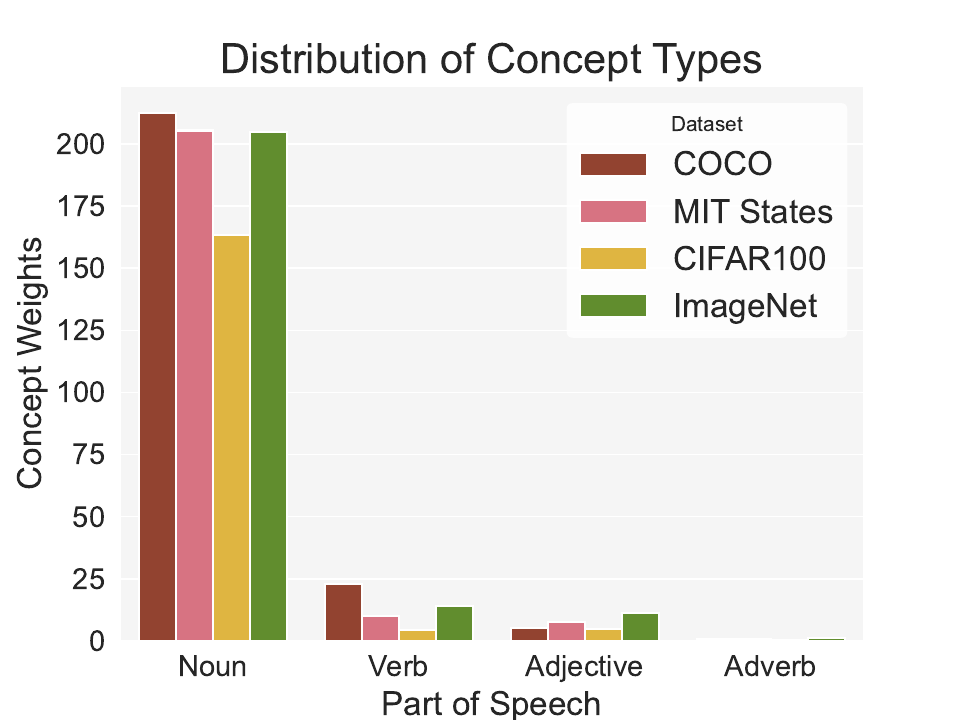}
        \caption{\method decompositions are mostly comprised of nouns across multiple datasets. } 
        \label{fig:concept_pos}
    
    \end{figure}

\subsection{Experiments on Alternative CLIP Architecture}
\label{sec:app_clip_rn50}
We present cosine reconstruction and zero-shot accuracy experiments with an alternative CLIP architecture from OpenAI with a ResNet50 backbone for the vision encoder. Note that these experiments were done with a 10000 size vocabulary of only one-word concepts. We find that results are similar to those presented in \ref{fig:cosine}, save for OpenAI's ResNet50 CLIP performing much worse than OpenCLIP's ViT B/32 backbone in general. 
\begin{figure}[!h]
    \centering
    \begin{subfigure}{0.32\columnwidth}
        \includegraphics[width=\columnwidth]{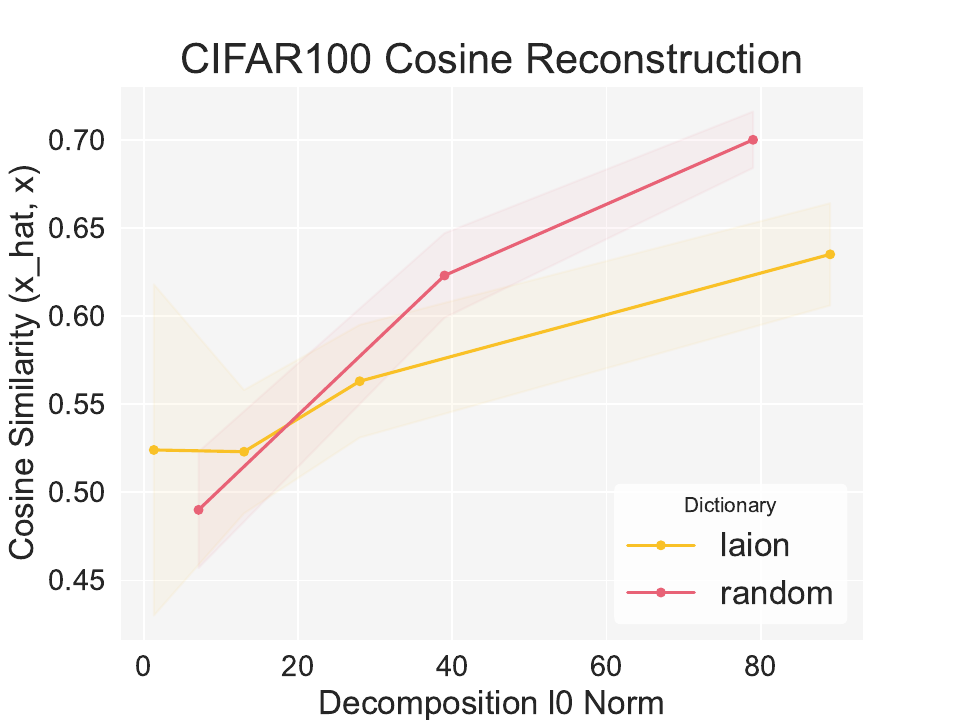}
    \end{subfigure}
    \begin{subfigure}{0.32\columnwidth}
        \includegraphics[width=\columnwidth]{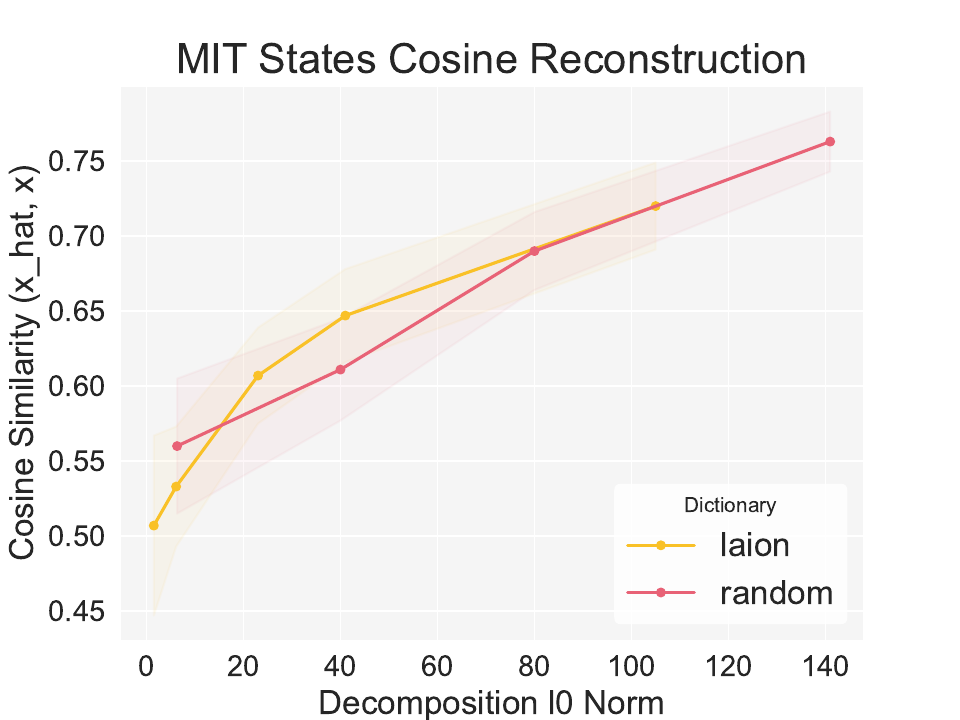}
    \end{subfigure}
    \begin{subfigure}{0.32\columnwidth}
        \includegraphics[width=\columnwidth]{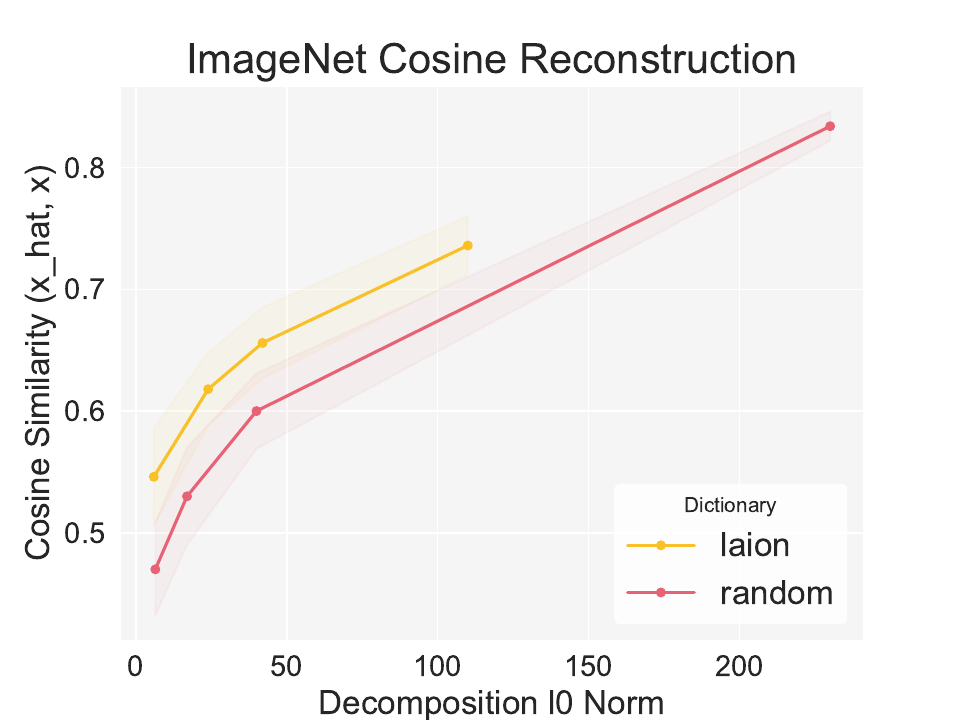}
    \end{subfigure}
    \begin{subfigure}{0.32\columnwidth}
        \includegraphics[width=\columnwidth]{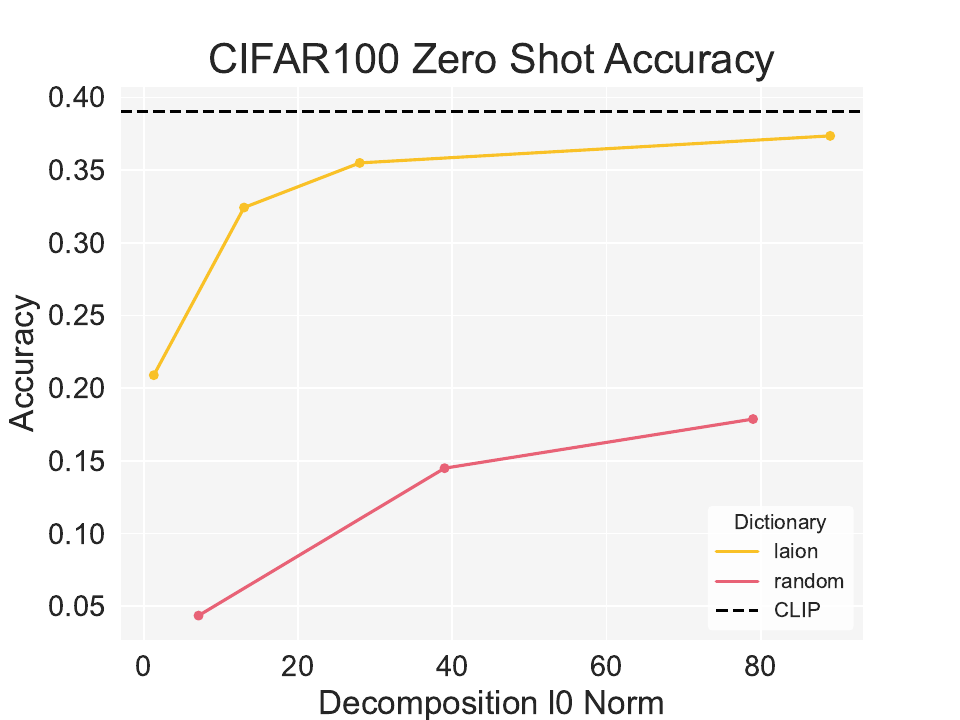}
    \end{subfigure}
    \begin{subfigure}{0.32\columnwidth}
        \includegraphics[width=\columnwidth]{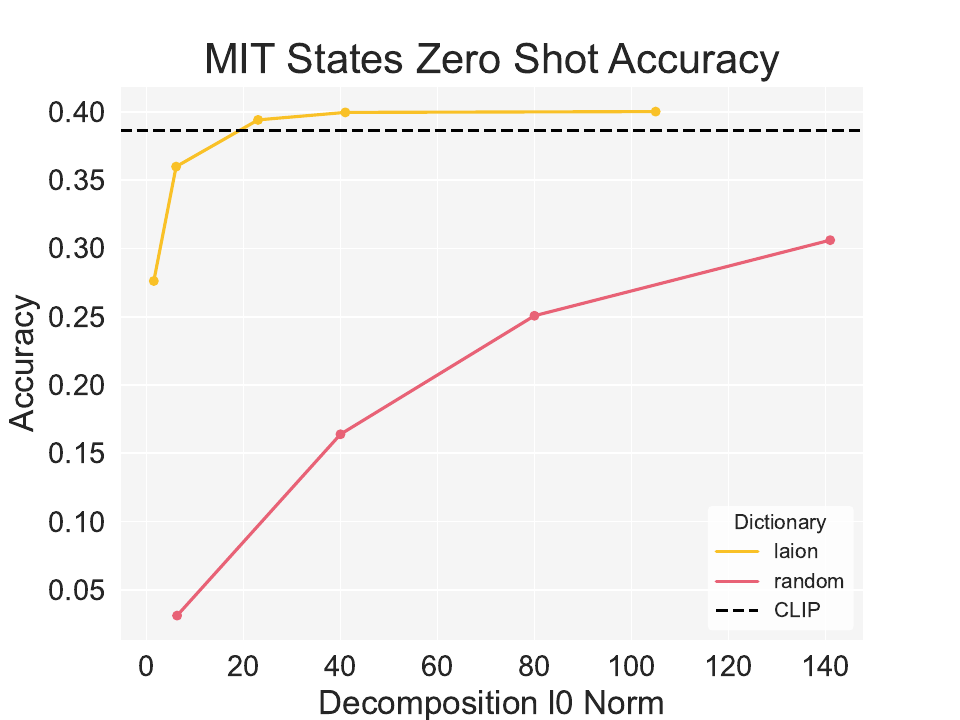}
    \end{subfigure}
    \begin{subfigure}{0.32\columnwidth}
        \includegraphics[width=\columnwidth]{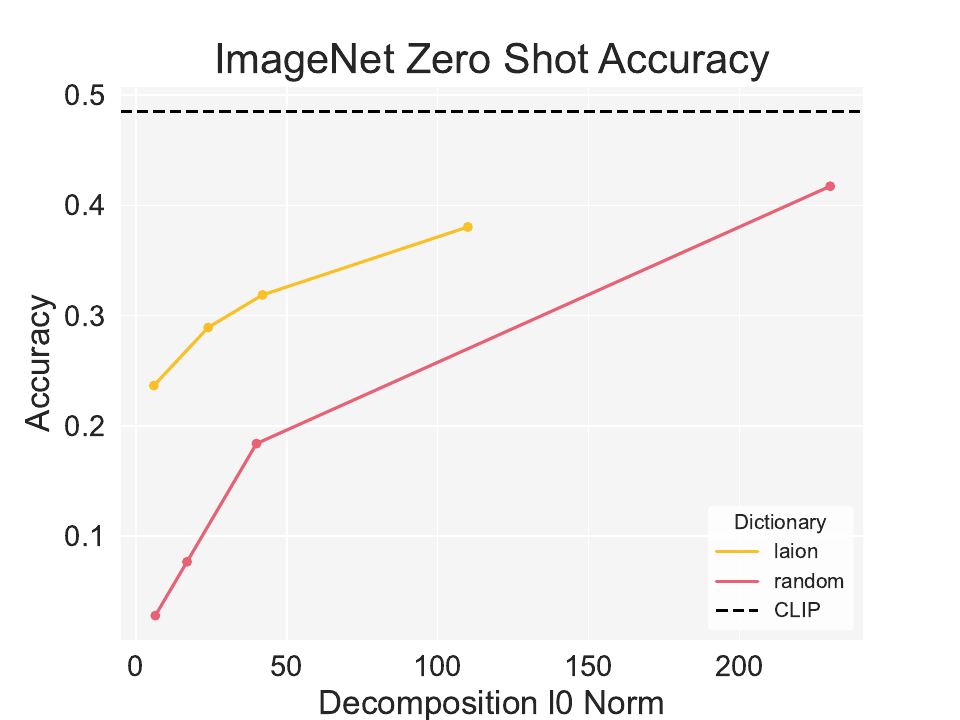}
    \end{subfigure}
    \caption{Performance of SpLiCE decomposition representations on zero-shot classification tasks (bottom row) and cosine similarity between CLIP embeddings and SpLiCE embeddings (top row) for OpenAI's ResNet50 CLIP model.}
    \label{fig:extra_clip}
\end{figure}


\end{document}